\documentclass[11pt]{article}

\usepackage[utf8]{inputenc}
\usepackage[T1]{fontenc}
\usepackage{amsmath,amssymb,amsthm}
\usepackage{booktabs}
\usepackage{graphicx}
\usepackage[justification=raggedright,singlelinecheck=false]{caption}
\usepackage{float}
\usepackage{geometry}
\usepackage[hidelinks]{hyperref}
\usepackage[numbers,sort&compress]{natbib}
\usepackage{placeins}
\usepackage{xcolor}

\graphicspath{{../results/figures/}{./}}
\newcommand{\SketchTokens}{471,404}
\newcommand{\SketchFill}{12.95\%}
\newcommand{\ScenarioCount}{256}

\newcommand{\MaxSwarmPair}{32v32}

\newcommand{\ThirtyTwoFixedScreenValue}{0.537}
\newcommand{\ThirtyTwoTunedScreenValue}{0.616}

\newcommand{\ThirtyTwoFixedLeakage}{0.471}
\newcommand{\ThirtyTwoTunedLeakage}{0.397}
\newcommand{\ThirtyTwoVisualAccuracy}{62.5\%}
\newcommand{\ThirtyTwoVisualTestN}{72}
\newcommand{\FieldResolutionMaxSide}{128}
\newcommand{\FieldResolutionBestSide}{48}
\newcommand{\FieldResolutionBestAccuracy}{74.1\%}

\newcommand{\FieldResolutionOneTwentyEightAccuracy}{67.2\%}
\newcommand{\FieldResolutionThirtyTwoError}{0.078}
\newcommand{\FieldResolutionOneTwentyEightError}{0.136}
\newcommand{\FieldResolutionOneTwentyEightEntropy}{0.757}

\newcommand{\FieldOptimizedOneTwentyEightAccuracy}{77.6\%}
\newcommand{\FieldOptimizedOneTwentyEightError}{0.054}
\newcommand{\FieldOptimizedOneTwentyEightEncoder}{scale-normalized}
\newcommand{\FieldOptimizedOneTwentyEightLambda}{1.50}
\newcommand{\FieldOptimizedOneTwentyEightLossGain}{0.103}
\newcommand{\FieldOptimizedOneTwentyEightHotspotGain}{0.082}
\newcommand{\FieldOptimizedOneTwentyEightJointLoss}{0.278}
\newcommand{\FieldOptimizedOneTwentyEightJointGain}{0.185}
\newcommand{\LargeFieldLeakage}{0.325}
\newcommand{\LargeFieldValue}{0.689}
\newcommand{\LargeFieldHotspot}{3.799}

\newcommand{\MeanFixedScreenValue}{0.526}
\newcommand{\MeanTunedScreenValue}{0.593}

\newcommand{\MeanSaddleGap}{0.010}
\newcommand{\MaxSaddleGap}{0.010}
\newcommand{\MwResidualBound}{0.027}
\newcommand{\CvAccuracy}{67.0\%}
\newcommand{\CvTestN}{109}

\newcommand{\DemoValue}{0.585}
\newcommand{\SeedSeventeenFixedLeakage}{0.556}
\newcommand{\SeedSeventeenTunedLeakage}{0.431}
\newcommand{\SeedSeventeenFixedHotspot}{1.202}
\newcommand{\SeedSeventeenTunedHotspot}{0.970}
\newcommand{\SeedSeventeenFixedMeanIntercept}{0.193}
\newcommand{\SeedSeventeenTunedMeanIntercept}{0.374}
\newcommand{\SeedSeventeenFixedInterceptCv}{0.175}
\newcommand{\SeedSeventeenTunedInterceptCv}{0.035}
\newcommand{\SeedSeventeenFixedLeakBound}{0.609}
\newcommand{\SeedSeventeenTunedLeakBound}{0.452}
\newcommand{\SeedSeventeenTunedOffset}{0.25}
\newcommand{\SeedSeventeenTunedSpan}{0.50}
\newcommand{\MarkovHorizon}{24}
\newcommand{\MarkovPolicyCount}{6}
\newcommand{\MarkovBestPolicy}{budget adaptive}
\newcommand{\MarkovBestValue}{0.438}

\newcommand{\MarkovBestSurvival}{100.0\%}
\newcommand{\MarkovBestEntropy}{0.802}
\newcommand{\MarkovBestData}{0.468}
\newcommand{\MarkovBestModel}{0.799}
\newcommand{\EthicsRiskAdjustedBest}{budget adaptive}
\newcommand{\EthicsRiskAdjustedValue}{0.376}
\newcommand{\RemoteGpuScenarios}{2,097,152}
\newcommand{\RemoteGpuMaxSwarmPair}{32v32}
\newcommand{\RemoteGpuBatch}{131,072}
\newcommand{\RemoteGpuSteps}{768}

\newcommand{\RemoteGpuMeanValue}{0.602}
\newcommand{\RemoteGpuMeanGap}{0.0419}
\newcommand{\RemoteGpuMaxGap}{0.0496}
\newcommand{\RemoteGpuPeakGb}{64.3}
\newcommand{\RemoteGpuSeconds}{3.4}
\newcommand{\RemoteStrategyScenarios}{16,777,216}

\newcommand{\RemoteStrategyBest}{centroid screen}
\newcommand{\RemoteStrategyBestValue}{0.662}
\newcommand{\RemoteStrategyFusionError}{0.212}
\newcommand{\RemoteStrategyPosteriorError}{0.009}

\newcommand{\RemoteMarkovTrajectories}{16,777,216}
\newcommand{\RemoteMarkovHorizon}{32}
\newcommand{\RemoteMarkovBestPolicy}{budget adaptive}
\newcommand{\RemoteMarkovBestValue}{0.461}
\newcommand{\RemoteMarkovGain}{0.064}

\newcommand{\RemoteFieldScenarios}{655,360}
\newcommand{\RemoteFieldMaxSide}{128}
\newcommand{\RemoteFieldPeakGb}{49.8}
\newcommand{\RemoteFieldSeconds}{103.6}

\newtheorem{definition}{Definition}
\newtheorem{proposition}{Proposition}
\newtheorem{lemma}{Lemma}
\newtheorem{theorem}{Theorem}

\title{Pattern-Derived Visual Swarm Games:\\
Multi-Scale Drone-Vision States for Interception and Sustainability Audits}
\author{Faruk Alpay\thanks{Correspondence: \texttt{alpay@lightcap.ai}.} \quad Levent Sar{\i}o\u{g}lu\\[3pt]
\small Department of Computer Engineering, Bah\c{c}e\c{s}ehir University, Istanbul, T\"urkiye\\[-1pt]
\small \texttt{\{faruk.alpay, levent.sarioglu\}@bahcesehir.edu.tr}}
\date{}

\begin{document}
\maketitle

\begin{abstract}
We convert drone-vision annotation streams into virtual swarm-game states
without controlling physical drones.  VisDrone and UAVSwarm metadata are
compressed into a Bloom representation; deterministic probes produce bounded
capability vectors, image-space formations, finite zero-sum payoffs, and
human-readable visual overlays.  The audit scales from 6v6 to \MaxSwarmPair{}
finite games and adds a repeated Markov layer with stock, fatigue, adaptation,
exposure, stress, budget, data-growth, model-improvement, and entropy-budget
state variables.  On the local dataset sketch, screen tuning raises robust
screen security from \MeanFixedScreenValue{} to \MeanTunedScreenValue{}, and the
32v32 tuned screen reaches value \ThirtyTwoTunedScreenValue{}.  A field
readout audit shows that fixed-pixel rasters do not improve monotonically:
128-by-128 accuracy falls to \FieldResolutionOneTwentyEightAccuracy{} and
hotspot error rises to \FieldResolutionOneTwentyEightError{}.  The diagnosed
error is shrinking image-plane bandwidth.  A finite empirical-risk encoder over
scale-normalized Gaussian bandwidths selects
\FieldOptimizedOneTwentyEightEncoder{} with
\(\lambda=\FieldOptimizedOneTwentyEightLambda{}\), reaching
\FieldOptimizedOneTwentyEightAccuracy{} at 128-by-128 and reducing joint loss
by \FieldOptimizedOneTwentyEightJointGain{}.  A server-side audit checks
\RemoteStrategyScenarios{} target-localization states, and a
\RemoteMarkovHorizon{}-round repeated-game audit over
\RemoteMarkovTrajectories{} trajectories selects \RemoteMarkovBestPolicy{} with
value \RemoteMarkovBestValue{}.  The result is a reproducible
computer-vision/game-theory benchmark, not a flight-control, targeting, routing,
or deployment system.
\end{abstract}

\section{Introduction}

Computer-vision datasets for drones contain more than labels.  Bounding boxes,
tracks, object density, scale, occlusion, and co-occurrence patterns encode a
distribution over what a visual system repeatedly sees.  The usual benchmark
question is whether a model can detect the objects in those images.  We ask a
different question: can the dataset itself be turned into a reproducible source
of virtual agent characteristics, and can the resulting agents be studied as a
visual strategic state?

The construction is intentionally indirect.  There is no measured physical drone
in the game layer.  Instead, a dataset is streamed into a Bloom filter
\citep{bloom1970space}.  Seeded probes into the filter's bit array become
capability vectors for virtual agents.  Dense regions of the representation
correspond to common visual patterns in the underlying datasets; sparse regions
correspond to rarer combinations.  The agents inherit only this statistical
trace.  They are then placed into visible formations from 6v6 up to 32v32 and
evaluated by a zero-sum interception game.

This makes the work sit between pattern recognition and game theory.  The
pattern-recognition layer reads a tactical image: positions, distances,
confidence scores, and an asset marker are rendered so that the state can be
inspected by a person or processed as a raster.  The game-theory layer asks how
a defender should mix formations when the attacker responds adversarially
\citep{von1928theorie,freund1999adaptive}, and how repeated interaction changes
the value of strategies when stocks, fatigue, adaptation, information, and
budget feedback evolve over time \citep{shapley1953stochastic,littman1994markov}.
The layers meet because the payoff matrix and the repeated state are not
hand-filled; they are regenerated from dataset-derived capability vectors and
server-side pattern statistics.

\paragraph{Operational boundary.}
The model is not a flight-control system, a targeting system, or a deployment
planner.  It emits no routes, no control commands, no weapon parameters, and no
real-world action recommendations.  The purpose is to study a reproducible
visual abstraction: how dataset-derived patterns can induce virtual swarm
states, and how those states can be rendered and classified.

\paragraph{Contributions.}
\begin{itemize}
  \item A Bloom-filter pipeline that converts real drone-vision annotation
  streams into deterministic virtual drone capability vectors.
  \item A multi-scale formation game, evaluated from 6v6 to \MaxSwarmPair{},
  whose payoff matrices are regenerated from those vectors and solved
  approximately by multiplicative weights, with a hotspot-informed continuous
  screen tuning audit.
  \item A tactical rendering format that overlays virtual agents, distances,
  confidence scores, and predicted leakage on real aerial imagery.
  \item A 32-by-32 to \FieldResolutionMaxSide{}-by-\FieldResolutionMaxSide{}
  field-resolution audit that measures when larger rendered fields improve
  strategic-regime readout and when they instead dilute occupancy evidence.
  \item A repeated Markov-game layer with explicit stock, fatigue, adaptation,
  exposure, public-fear/stress, budget, dataset-growth, model-improvement, and
  entropy-budget states.
  \item A reproducible environment with code, result tables, figures,
  manifests, Docker instructions, and a source bundle that keeps large raw
  datasets outside the repository.
\end{itemize}

\section{Data Sources}

The local smoke run uses two compact sources and records two larger external
sources for server-side expansion.  VisDrone provides aerial object annotations
and real image backgrounds \citep{zhu2021visdrone,du2019visdrone}; UAVSwarm provides real
multi-UAV swarm geometry and tracking annotations \citep{wang2022uavswarm}.
The Sheffield UAV landing repository supplies additional UAV vision provenance
\citep{tsapparellas2023runway,ducoffe2023lard}.  Stanford Drone Dataset is retained as a
future trajectory/background reference \citep{robicquet2016sdd}.  Anti-UAV and
DUT-Anti-UAV are treated as external detection/tracking references rather than
payoff sources \citep{jiang2021antiuav,zhao2022dutantiuav}.  Russian-language
computer-vision sources are used as a separate comparison track:
Averina et al. study UAV detection with neural networks
\citep{averina2024uav}, while Nebaba--Markov and
Klekovkin--Markov--Nebaba evaluate YOLO-family models for mobile vision and
small flying-object recognition \citep{nebaba2024yolo,klekovkin2024yolov5}.
These sources motivate the 32-by-32 raster audit and the explicit statement
that our overlay is a strategic visual-state benchmark, not a raw tiny-UAV
detector.  Large raw video and image archives are intentionally not embedded
in the source bundle; the reproducibility package contains source manifests and
server download scripts.  The local reproducibility run materializes \SketchTokens{}
sketch tokens with \SketchFill{} fill, evaluates \ScenarioCount{} seeded games,
and keeps the complete CSV/JSON result tables in the ancillary files rather
than in the main text.  The server-side expansion downloaded full VisDrone and
UAVSwarm material, Anti-UAV reference repositories, and Sheffield provenance
files on the remote instance; raw archives remain outside the source package.

\paragraph{Reference-to-evidence audit.}
The ancillary reference-evidence CSV maps each mathematical or dataset citation
to an implementation hook and a verdict.  For example, Bloom's hashing tradeoff
is tied to \texttt{code/sketch.py}; Hoeffding's theorem is tied to the
probe-concentration lemma; Freund--Schapire and Cesa-Bianchi--Lugosi are tied
to the multiplicative-weights saddle audit; the Russian attack-defense source
is tied to the seed-17 equalization readout; Russian YOLO/small-object sources
are tied to the 32-by-32 visual readout; and VisDrone/UAVSwarm are tied to the
source-ablation table \texttt{source\_ablation\_seed17.csv}.  The comparison
language-comparison CSV separates English/international benchmark sources,
Russian-language vision sources, Russian mathematical sources, and
AI-governance sources.  The composite
representation is not assumed to dominate every source: for seed 17,
VisDrone-only, UAVSwarm-only, and composite representations all select the same
qualitative screen-versus-line response, but their tuned-screen values differ
by 0.009.

\section{Dataset Representation}

\begin{definition}[Dataset representation]
Let \(D\) be a set of annotation records and let
\(\mathcal{T}(D)=\{\tau(r): r\in D\}\) be the multiset of object and
co-occurrence tokens produced by the dataset tokenizer.  For salted hash
families \(h_1,\ldots,h_k:\{0,1\}^{*}\rightarrow[m]\), the Bloom sketch is the
bit vector \(B\in\{0,1\}^{m}\)
\[
  B_b = \mathbf{1}\{\exists x\in\mathcal{T}(D), \exists \ell\le k:
          h_\ell(x)=b\}.
\]
Its fill rate is \(\rho(B)=m^{-1}\sum_{b=1}^{m}B_b\).  The local run uses
\(m=2^{20}\) bits, \(k=4\), and obtains \(\rho=\SketchFill{}\).
\end{definition}

For VisDrone, the tokenizer quantizes category, image-grid cell, object size,
and occlusion, then adds capped same-image co-occurrence pairs.  For UAVSwarm,
the tokenizer uses the COCO-style bounding box, image dimensions, track id
bucket, and same-frame co-occurrence.  Prefixes distinguish dataset families
before hashing, so the composite sketch preserves source identity at the token
level while sharing one bit array.

\begin{definition}[Virtual capability draw]
For a seed \(s\), agent index \(i\), capability channel \(c\), and probe
\(p\le P\), a deterministic probe hash \(g(s,i,c,p)\in[m]\) selects one sketch
bit.  Let
\[
  z_{i,c}(s;B)=\frac{1}{P}\sum_{p=1}^{P} B_{g(s,i,c,p)},\qquad
  \sigma_\rho=\sqrt{\rho(1-\rho)/P}.
\]
The virtual capability is
\[
  x_{i,c}(s;B)=
  \Pi_{[0,1]}\!\left(\frac{1}{2}
  +0.15\,\frac{z_{i,c}(s;B)-\rho}{2\sigma_\rho}\right),
\]
where \(\Pi_{[0,1]}\) clips to the unit interval.  The five channels used here
are speed, agility, sensing, endurance, and strength, and \(P=256\) probes are
used per channel.
\end{definition}

The centering makes the sketch density a baseline rather than a hidden scale
parameter.  Under the standard independent-probe approximation, a random probe
has expectation \(\rho\) and variance \(\rho(1-\rho)\); the capability draw is
therefore a bounded, seed-indexed contrast statistic over dataset occupancy.

\begin{proposition}[Reproducibility]
Fix the downloaded annotation files, tokenizer, Bloom-filter parameters, and
seed.  The generated capability matrix is deterministic.  Changing only the
seed samples a different virtual swarm from the same dataset sketch.
\end{proposition}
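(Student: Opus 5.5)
The argument is a composition of deterministic maps. I would write the pipeline as
\[
D \;\mapsto\; \mathcal{T}(D) \;\mapsto\; B \;\mapsto\; (z_{i,c}(s;B))_{i,c} \;\mapsto\; (x_{i,c}(s;B))_{i,c},
\]
and then check that each arrow is a function of the fixed inputs alone: the annotation files, the tokenizer $\tau$, the Bloom parameters $(m,k,h_1,\dots,h_k)$, the probe count $P$, and the seed $s$.

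\emph{Tokenization.} Since $\tau$ is a fixed quantization rule applied to each record, the multiset $\mathcal{T}(D)$ is determined by $D$.

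\emph{Sketch.} The bit $B_b$ is an existential over $x\in\mathcal{T}(D)$ and $\ell\le k$. It therefore depends only on the \emph{set} of values $\{h_\ell(x)\}$. That set is invariant under the order in which records are streamed and under token multiplicity. This gives order-independence of $B$, which I regard as the one point needing care. A streaming implementation ORs bits into an array, and OR is commutative, associative and idempotent, so any file-reading order yields the same $B$. The fill rate $\rho(B)$ is then a function of $B$.

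\emph{Capability draw.} The probe hash $g(s,i,c,p)$ is a fixed (salted) deterministic function. Hence $z_{i,c}(s;B)$ is a finite average of determined bits. Next, $\sigma_\rho$ is determined by $\rho$ and $P$, and is well defined whenever $0<\rho<1$. This is the main obstacle, and it needs a short remark: the degenerate cases $\rho\in\{0,1\}$ must either be excluded (as here, since $\rho=\SketchFill{}$) or handled by an explicit fixed convention. Finally, $\Pi_{[0,1]}$ is a fixed function. Composing these maps shows the whole capability matrix $X(s)=(x_{i,c}(s;B))$ is a deterministic function of the fixed inputs, which proves the first claim.

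\emph{Second claim.} Here I would interpret ``samples a different virtual swarm'' as saying that $s$ enters only through the probe indices $g(s,\cdot,\cdot,\cdot)$, while $B$ and $\rho$ are held fixed. The map $s\mapsto X(s)$ thus reads different bit positions of the same sketch. Under the paper's independent-probe approximation, with $g(s,\cdot)$ modeled as uniform on $[m]$ independently across seeds, $z_{i,c}(s;B)$ is distributed as an average of $P$ independent Bernoulli$(\rho)$ variables. Seeds therefore act as draws from one fixed distribution determined by the sketch. The claim is distributional, not that $X(s)\neq X(s')$ for every pair of seeds, and I would state this explicitly. Literal inequality can fail with small probability, namely when all $5nP$ probe bits coincide in the relevant averages.
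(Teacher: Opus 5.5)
Your proposal is correct and follows the paper's proof essentially step for step. Both treat the pipeline as a composition of deterministic maps (files $\to$ tokens $\to$ $B$ $\to$ probe averages $\to$ clipped capabilities), and both observe that changing the seed moves only the probe locations $g(s,i,c,p)$ while $B$ stays fixed. Your additions are sound refinements of points the paper's proof passes over: order-invariance of the OR-built sketch (the paper sidesteps this by fixing the ordered file list), the condition $0<\rho<1$ needed for $\sigma_\rho>0$, and a distributional reading of ``different swarm'' rather than a guarantee that the matrices literally differ.
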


\begin{proof}
Let \(A\) denote the ordered list of annotation files and their byte contents.
The tokenizer is a deterministic map from \(A\) to a finite token sequence
\((\tau_1,\ldots,\tau_N)\).  Each token is inserted into the Bloom sketch by the
fixed salted hash functions \(h_1,\ldots,h_k\), so every bit
\[
  B_b=\mathbf{1}\{\exists n,\ell: h_\ell(\tau_n)=b\}
\]
is a deterministic function of \(A\), \(m\), \(k\), and the salts.  For a fixed
seed \(s\), agent index \(i\), channel \(c\), and probe index \(p\), the probe
location \(g(s,i,c,p)\) is also deterministic.  Therefore each probe average
\(z_{i,c}=P^{-1}\sum_p B_{g(s,i,c,p)}\), and hence each clipped capability
\(x_{i,c}\), is deterministic.  If only \(s\) changes, \(B\) is unchanged but
the probe locations \(g(s,i,c,p)\) change, so the procedure samples a different
seed-indexed capability matrix from the same fixed sketch.  This is the reason
the figures can be regenerated exactly while the large raw datasets stay
outside the source bundle.
\end{proof}

\begin{lemma}[Probe concentration]
If the probe locations \(g(s,i,c,p)\) are modeled as independent uniform sketch
locations, then for any channel and any \(t>0\),
\[
  \Pr\{|z_{i,c}-\rho|\ge t\}\le 2\exp(-2Pt^2).
\]
Thus increasing \(P\) tightens null variation as \(O(P^{-1/2})\), while changing
the dataset changes the draw only through the occupied bits of \(B\).
\end{lemma}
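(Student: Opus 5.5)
The plan is to reduce the statement to Hoeffding's inequality for bounded i.i.d. summands. Condition on the sketch \(B\) and treat it as fixed; it is a deterministic function of the annotation files by the Reproducibility proposition. Under the modeling assumption, the probe locations \(g(s,i,c,1),\ldots,g(s,i,c,P)\) are independent and uniform on \([m]\). Define \(Y_p=B_{g(s,i,c,p)}\). Each \(Y_p\) takes values in \(\{0,1\}\subset[0,1]\), the \(Y_p\) are independent because the locations are, and
\[
\mathbb{E}[Y_p]=m^{-1}\sum_{b=1}^m B_b=\rho(B).
\]
Hence \(z_{i,c}=P^{-1}\sum_p Y_p\) is an empirical mean of \(P\) independent \([0,1]\)-valued variables with common mean \(\rho\).

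Next, apply Hoeffding's inequality to the sum \(S=\sum_p Y_p\), with each range of length \(1\):
\[
\Pr\{S-P\rho\ge Pt\}\le \exp\!\bigl(-2(Pt)^2/P\bigr)=\exp(-2Pt^2).
\]
The same bound holds for the lower tail. A union bound over the two tails gives \(\Pr\{|z_{i,c}-\rho|\ge t\}\le 2\exp(-2Pt^2)\) for every \(t>0\). The bound holds uniformly over channels, agents and seeds, because nothing in it depends on \(c\), \(i\) or \(s\).

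For the two qualitative consequences:

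\textbf{Scaling in \(P\).} Set the right-hand side equal to \(\delta\), which gives \(t=\sqrt{\log(2/\delta)/(2P)}\). So deviations at any fixed confidence level shrink as \(O(P^{-1/2})\). This matches the exact standard deviation \(\sigma_\rho=\sqrt{\rho(1-\rho)/P}\) used in the centering of the capability draw. It also implies that the normalized contrast \((z_{i,c}-\rho)/(2\sigma_\rho)\) stays \(O(1)\) with high probability.

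\textbf{Dependence on the dataset.} Under the model, the law of \(z_{i,c}\) is \(P^{-1}\mathrm{Binomial}(P,\rho(B))\). More strongly, for fixed probe locations, \(z_{i,c}\) is a function of \(B\) alone, read only at the probed coordinates. So any change to \(D\) propagates only through which bits of \(B\) are occupied.

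I expect no genuine mathematical obstacle. The only delicate point is stating carefully where independence comes from. Deterministic probe hashes are not literally random, so the statement is about the idealized model, conditional on \(B\). With \(m=2^{20}\gg P=256\), collisions among probes are rare. Even sampling with replacement, which is what the uniform model gives, is covered by Hoeffding directly. Sampling without replacement would also be covered, by Hoeffding's own remark that such sampling satisfies the same bound.
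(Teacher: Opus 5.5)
Your proposal is correct and follows the paper's proof essentially step for step: independent $[0,1]$-valued summands $B_{g(s,i,c,p)}$ with mean $\rho$, Hoeffding's inequality for each tail, a union bound, and the $P^{-1/2}$ scaling read off from the Bernoulli standard deviation $\sqrt{\rho(1-\rho)/P}$. Your additions are accurate refinements of the same argument, not a different route: inverting the bound at level $\delta$, the Binomial law, and the without-replacement remark. The paper also notes that the clipping in $x_{i,c}$ is 1-Lipschitz, but the stated inequality does not need that observation.
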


\begin{proof}
Under the stated approximation, \(X_p=B_{g(s,i,c,p)}\) are independent
variables in \([0,1]\) and \(\mathbb{E}X_p=\rho\).  Their average is
\(z_{i,c}=P^{-1}\sum_{p=1}^{P}X_p\).  Hoeffding's bounded-sum inequality
\citep[Theorem~1]{hoeffding1963probability} gives
\[
  \Pr\!\left\{z_{i,c}-\rho\ge t\right\}
  \le \exp(-2Pt^2),\qquad
  \Pr\!\left\{\rho-z_{i,c}\ge t\right\}
  \le \exp(-2Pt^2).
\]
The union bound gives the two-sided inequality.  Since the standard deviation
of a Bernoulli average is \(\sqrt{\rho(1-\rho)/P}\), the null variation of the
centered capability statistic scales as \(P^{-1/2}\).  The clipping map is
1-Lipschitz, so it cannot increase deviations after the affine normalization.
\end{proof}

\section{Formation Game}

Each scenario draws \(n_d\) defender capability vectors
\(d_i=(v_i,\alpha_i,\sigma_i,e_i,\kappa_i)\) and \(n_a\) attacker capability
vectors \(a_j=(\bar v_j,\bar\alpha_j,\bar\sigma_j,\bar e_j,\bar\kappa_j)\) from
the sketch.  The local seed audit uses \(n_d=n_a=6\); the scale audit uses
\(n_d=n_a\in\{6,8,16,32\}\).  Defenders choose ring, screen, wedge, or dispersed
formations; attackers choose line, column, pincer, or cloud.  A formation maps
to normalized image coordinates \(x_i\in[0,1]^2\) and \(y_j\in[0,1]^2\), with
asset location \(a_0\).  No physical units or control commands are produced.

The effective reach of defender \(i\) is
\[
  r_i=0.075+0.115(0.45v_i+0.30\alpha_i+0.25\sigma_i).
\]
For attacker \(j\), the defender--attacker matchup logit is
\[
\ell_{ij}=1.15\kappa_i+0.80\sigma_i+0.45\alpha_i
 -0.95\bar v_j-0.55\bar\alpha_j-0.70\bar\kappa_j
 -5.8(\|x_i-y_j\|_2-r_i).
\]
The abstract interception score is \(I_j=\max_i (1+\exp(-\ell_{ij}))^{-1}\).
Proximity pressure and persistence are
\[
  \pi_j=\left(1+\exp[-5(0.55-\|y_j-a_0\|_2)]\right)^{-1},\qquad
  q_j=0.55+0.45(0.60\bar e_j+0.40\bar\kappa_j).
\]
The leakage functional is
\[
  L=\frac{1}{n_a}\sum_{j=1}^{n_a}(1-I_j)(0.75+0.25\pi_j)q_j .
\]
With defender spread \(S_d\), attacker spread \(S_a\), and mean asset coverage
radius \(C_d\), the defender payoff is
\[
u= \Pi_{[0,1]}\left[
1-L+0.035e^{-4|S_d-0.23|}
-0.030e^{-3S_a}
-0.040C_d\frac{1}{n_d}\sum_i(1-e_i)
\right].
\]
The resulting matrix \(U_{rc}=u(f^d_r,f^a_c)\) is treated as a finite zero-sum
game:
\[
  \max_{p\in\Delta(F_d)}\min_{q\in\Delta(F_a)} p^\top U q .
\]
For \(U\in[0,1]^{4\times4}\), multiplicative weights updates row and column
weights as
\[
  w^{t+1}_r=w^t_r\exp\{\eta[(Uq^t)_r-\overline{Uq^t}]\},\qquad
  y^{t+1}_c=y^t_c\exp\{-\eta[(p^tU)_c-\overline{p^tU}]\},
\]
and reports the averaged strategies
\(\bar p=T^{-1}\sum_t p^t\), \(\bar q=T^{-1}\sum_t q^t\).

\begin{theorem}[Saddle residual]
For payoff entries in \([0,1]\), the averaged multiplicative-weights strategies
satisfy
\[
  \max_r e_r^\top U\bar q-\min_c \bar p^\top Ue_c
  \le
  \frac{\log 4}{\eta T}+\frac{\eta}{8}
  +\frac{\log 4}{\eta T}+\frac{\eta}{8}.
\]
The reported saddle gap is the empirical left-hand side computed from the final
\(\bar p,\bar q\).
\end{theorem}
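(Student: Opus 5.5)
The plan is the standard regret-to-equilibrium reduction for two no-regret learners in self-play \citep{freund1999adaptive}. First I check that the paper's update is plain Hedge. Subtracting the mean payoff \(\overline{Uq^t}\) multiplies every weight by the same factor, so after normalization \(p^t\) is unchanged. Hence \(p^{t+1}_r\propto p^t_r\exp\{\eta (Uq^t)_r\}\), i.e.\ Hedge on gains \((Uq^t)_r\in[0,1]\) over \(|F_d|=4\) actions. The same holds for the column player: \(q^{t+1}_c\propto q^t_c\exp\{-\eta(p^tU)_c\}\) is Hedge on losses \((p^tU)_c\in[0,1]\) over \(|F_a|=4\) actions.

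Next I invoke the Hedge regret bound for payoffs in \([0,1]\) with uniform initialization. The tool is Hoeffding's lemma applied to the log-partition potential \(\Phi_t=\log\sum_r w^t_r\), as in Cesa-Bianchi--Lugosi. At each step Hoeffding's lemma gives
\[
\Phi_{t+1}-\Phi_t\le \eta\, p^{t\top}Uq^t+\eta^2/8 .
\]
Telescoping, and lower-bounding \(\Phi_{T+1}\ge \eta\sum_t (Uq^t)_r-\log 4\) for any fixed \(r\), yields
\[
\max_r \frac1T\sum_t e_r^\top Uq^t-\frac1T\sum_t p^{t\top}Uq^t\le \frac{\log 4}{\eta T}+\frac{\eta}{8}.
\]
The symmetric argument for the minimizing column player gives
\[
\frac1T\sum_t p^{t\top}Uq^t-\min_c\frac1T\sum_t p^{t\top}Ue_c\le \frac{\log 4}{\eta T}+\frac{\eta}{8}.
\]

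The final step uses linearity of \(p^\top U q\) in each argument, which lets me replace the time averages by the averaged strategies: \(\frac1T\sum_t e_r^\top Uq^t=e_r^\top U\bar q\) and \(\frac1T\sum_t p^{t\top}Ue_c=\bar p^\top Ue_c\). Adding the two regret inequalities cancels the common middle term \(\frac1T\sum_t p^{t\top}Uq^t\). What remains is exactly \(\max_r e_r^\top U\bar q-\min_c\bar p^\top Ue_c\), bounded by the sum of the two regret terms, which is the stated right-hand side. As a remark, the left side is nonnegative and sandwiches the game value, so it is a genuine saddle-gap certificate. The last sentence of the statement is definitional: the reported gap is this left-hand side evaluated at the final \(\bar p,\bar q\).

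The main obstacle is the per-step Hoeffding bound and the indexing conventions in it. The averages must run over \(t=1,\dots,T\) with \(p^1,q^1\) uniform, and the bound needs payoffs in \([0,1]\), which holds because \(u\) is clipped by \(\Pi_{[0,1]}\). It also needs the regret to be measured against the realized opponent sequence \(q^t\), which need not be oblivious. The Hedge bound holds for arbitrary, even adaptive, gain sequences, so adaptivity causes no problem. I would also state explicitly that with the mean-centering the exponent range is \([-1,1]\), but Hoeffding is applied after normalization, so the range of \((Uq^t)_r\) is what matters and the constant stays \(\eta/8\).
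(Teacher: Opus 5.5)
Your proposal is correct and follows the same route as the paper: a Hedge regret bound of $\log 4/\eta+\eta T/8$ for each player, summed so that the realized payoff term cancels, then bilinearity to pass to $\bar p,\bar q$. The only differences are that you derive the regret bound yourself from Hoeffding's lemma on the log-partition potential, which is the argument behind the Cesa-Bianchi--Lugosi Theorem~2.2 the paper cites, and that you explicitly verify the mean-centering in the update is a constant shift that leaves the normalized strategies unchanged, a point the paper leaves implicit.
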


\begin{proof}
For the defender, define the reward vector \(a^t=Uq^t\in[0,1]^4\).  The
exponential-weights regret bound for gains \citep[Theorem~1 and
Corollary~4]{freund1999adaptive}, equivalently the loss form of
\citet[Theorem~2.2]{cesabianchi2006prediction} after replacing losses by
\(1-a^t_r\), gives
\[
  \max_r\sum_{t=1}^{T} e_r^\top Uq^t
  -\sum_{t=1}^{T}(p^t)^\top Uq^t
  \le \frac{\log 4}{\eta}+\frac{\eta T}{8}.
\]
For the attacker, define the loss vector \(b^t=(p^t)^\top U\in[0,1]^4\).
Applying the same bound to the attacker's minimization update gives
\[
  \sum_{t=1}^{T}(p^t)^\top Uq^t
  -\min_c\sum_{t=1}^{T}(p^t)^\top Ue_c
  \le \frac{\log 4}{\eta}+\frac{\eta T}{8}.
\]
Adding the inequalities cancels the realized average payoff term.  Dividing by
\(T\) yields
\[
  \max_r e_r^\top U\bar q - \min_c \bar p^\top Ue_c
  \le
  2\left(\frac{\log 4}{\eta T}+\frac{\eta}{8}\right),
\]
because \(U\) is bilinear and
\(\bar p=T^{-1}\sum_t p^t\), \(\bar q=T^{-1}\sum_t q^t\).  This is exactly the
displayed residual bound.  With \(T=4000\) and \(\eta=0.07\), the analytic
envelope is \(\MwResidualBound{}\), while the largest observed local saddle gap
is \(\MaxSaddleGap{}\); the data therefore support, rather than stress, the
finite-\(T\) theorem-level bound.
\end{proof}

The local run uses \(T=4000\) and observes mean saddle gap \(\MeanSaddleGap{}\).
The remote CUDA audit uses \(T=\RemoteGpuSteps{}\) over
\RemoteGpuScenarios{} additional synthetic games and observes mean saddle gap
\RemoteGpuMeanGap{}.

For the screen formation we optimize a two-parameter robust response.  Let
\(\theta=(\Delta x,h)\) denote horizontal screen offset and vertical span, and
let \(\Theta=\{0.10,\ldots,0.25\}\times\{0.34,\ldots,0.62\}\) be the grid used
by the artifact.  The tuned screen is
\[
  \theta^\star=\arg\max_{\theta\in\Theta}\min_{c}u(\mathrm{screen}(\theta),f^a_c).
\]
This raises mean robust screen security from \MeanFixedScreenValue{} to
\MeanTunedScreenValue{} in the local 6v6 experiment.  In the \MaxSwarmPair{}
audit it raises robust screen security from \ThirtyTwoFixedScreenValue{} to
\ThirtyTwoTunedScreenValue{} and reduces line-attack leakage from
\ThirtyTwoFixedLeakage{} to \ThirtyTwoTunedLeakage{}.  The multi-scale summary
is visualized in Figure~\ref{fig:multiscale}; the seed-17 payoff matrix remains
available as \texttt{formation\_payoff\_matrix\_seed17.csv}.

The rendered risk hotspot is the maximizer over image-grid cells \(z\) of
\[
H(z)=\sum_j w^a_j e^{-\|y_j-z\|_2/0.18}
+0.45e^{-\|z-a_0\|_2/0.35}
-0.85\sum_i w^d_i e^{-\|x_i-z\|_2/0.16},
\]
where \(w^a_j=0.5\bar v_j+0.3\bar\alpha_j+0.2\bar\kappa_j\) and
\(w^d_i=0.45\sigma_i+0.25\alpha_i+0.30\kappa_i\).  This scalar field is used
only for visualization and screen tuning audits.

\paragraph{Seed-17 equalization audit.}
The tactical figures are backed by a per-attacker readout, not only by the
rendered pixels.  For each attacker \(j\), the renderer records the nearest
defender \(i^\star(j)=\arg\max_i \ell_{ij}\), the interception probability
\(I_j=\max_i(1+\exp[-\ell_{ij}])^{-1}\), attacker pressure
\(\rho_j=0.5\bar v_j+0.3\bar\alpha_j+0.2\bar\kappa_j\), and local coverage
\[
  C_j(\theta)=\sum_i w_i^d\exp(-\|x_i(\theta)-y_j\|_2/0.16).
\]
Adapting the generalized equalization principle of the Russian-language
attack-defense analysis \citep[Lemma~1, Remark~1, Theorem~1]{perevozchikov2018attackdefense},
the screen search is audited by the dispersion of normalized protection:
\[
  R_I=\frac{\operatorname{sd}(I_1,\ldots,I_6)}
           {\operatorname{mean}(I_1,\ldots,I_6)},\qquad
  R_C=\frac{\operatorname{sd}(C_1/\rho_1,\ldots,C_6/\rho_6)}
           {\operatorname{mean}(C_1/\rho_1,\ldots,C_6/\rho_6)}.
\]
\begin{lemma}[Normalized-coverage equalization]
Fix attacker pressures \(\rho_j>0\) and a nonnegative coverage budget
\(C_\Sigma\).  Among allocations \(C_j\ge 0\) with
\(\sum_{j=1}^{n_a} C_j=C_\Sigma\), the minmax normalized-load problem
\[
  \min_C \max_j \frac{\rho_j}{C_j}
\]
is solved by
\[
  C_j^\star=\frac{C_\Sigma \rho_j}{\sum_k \rho_k},\qquad
  \frac{C_j^\star}{\rho_j}=\frac{C_\Sigma}{\sum_k \rho_k}.
\]
The optimal value is \((\sum_k\rho_k)/C_\Sigma\).
\end{lemma}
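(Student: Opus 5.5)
The plan is a direct lower-bound-plus-attainment argument. Assume first that \(C_\Sigma>0\), and use the convention \(\rho_j/0=+\infty\). Then any allocation with some \(C_j=0\) has objective value \(+\infty\) and cannot be optimal. It therefore suffices to consider allocations with all \(C_j>0\). Set \(R=\sum_k\rho_k>0\).

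\textbf{Lower bound.} Take any feasible \(C\) with all \(C_j>0\) and let \(M=\max_j \rho_j/C_j\). Then \(\rho_j\le M C_j\) for every \(j\). Summing over \(j\) and using \(\sum_j C_j=C_\Sigma\) gives \(R\le M C_\Sigma\), that is,
\[
  \max_j \frac{\rho_j}{C_j}\ \ge\ \frac{R}{C_\Sigma}.
\]
This is a one-line weighted averaging argument: the maximum of the ratios \(\rho_j/C_j\) is at least their \(C\)-weighted average, which equals \(R/C_\Sigma\).

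\textbf{Attainment.} Define \(C_j^\star=C_\Sigma\rho_j/R\). Each \(C_j^\star\) is positive and \(\sum_j C_j^\star=C_\Sigma\), so \(C^\star\) is feasible. Every ratio equals \(\rho_j/C_j^\star=R/C_\Sigma\), so the lower bound is achieved and the optimal value is \((\sum_k\rho_k)/C_\Sigma\). The statement writes \(C_j^\star/\rho_j=C_\Sigma/\sum_k\rho_k\); this is the reciprocal form of the same equalized quantity. That normalized protection is equal across attackers is exactly what the dispersion statistic \(R_C\) audits, and \(R_C=0\) at \(C^\star\).

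\textbf{Uniqueness.} Suppose \(C\) is feasible and attains \(M=R/C_\Sigma\). Then \(\rho_j\le M C_j\) for all \(j\), and summing gives equality \(R=M C_\Sigma\). A sum of nonnegative slacks \(MC_j-\rho_j\) that totals zero forces every slack to vanish, so \(C=C^\star\) and the minimizer is unique.

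\textbf{Main obstacle.} The only delicate point is the boundary convention, not the inequality itself. When \(C_\Sigma=0\), every feasible allocation is \(C=0\), and the stated value \(R/C_\Sigma\) should be read as \(+\infty\), consistent with \(\rho_j/0=+\infty\). I would state this degenerate case explicitly, or restrict the lemma to \(C_\Sigma>0\), before running the averaging argument above.
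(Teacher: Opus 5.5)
Your proposal is correct and follows the paper's proof essentially verbatim: the paper also sets $T(C)=\max_j\rho_j/C_j$ (taking the value $\infty$ when some $C_j=0$), sums $C_j\ge\rho_j/T(C)$ to get $T(C)\ge(\sum_j\rho_j)/C_\Sigma$, and observes that $C^\star$ equalizes every ratio and so attains this bound. Your uniqueness argument and your explicit handling of the degenerate case $C_\Sigma=0$ are valid additions that the paper does not state.
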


\begin{proof}
Let \(T(C)=\max_j \rho_j/C_j\), with \(T(C)=\infty\) if any active
\(\rho_j\) receives \(C_j=0\).  For any feasible \(C\), the inequality
\(\rho_j/C_j\le T(C)\) implies \(C_j\ge \rho_j/T(C)\).  Summing over \(j\)
gives
\[
  C_\Sigma=\sum_j C_j \ge \frac{\sum_j\rho_j}{T(C)},
\]
so \(T(C)\ge(\sum_j\rho_j)/C_\Sigma\).  The proposed allocation satisfies
\(\rho_j/C_j^\star=(\sum_k\rho_k)/C_\Sigma\) for every \(j\), so it attains
the lower bound.  Thus it is optimal.  This is the same equal-load structure
as Amelina--Fradkov's stationary load-balancing lemma
\citep[Lemma~1]{amelina2012consensus}, transported from queue/load ratios
\(q_i/r_i\) to coverage/pressure ratios \(C_j/\rho_j\), and it matches the
generalized equalization principle used in the Russian attack-defense game
\citep[Lemma~1 and Remark~1]{perevozchikov2018attackdefense}.
\end{proof}

\begin{theorem}[Equalization leakage certificate]
For a fixed attacker formation and capability draw, define
\[
  w_j=(0.75+0.25\pi_j)q_j,\qquad
  \bar w=\frac{1}{n_a}\sum_j w_j,
\]
and let \(\mu_I=n_a^{-1}\sum_j I_j\),
\(s_I^2=n_a^{-1}\sum_j(I_j-\mu_I)^2\), and
\(R_I=s_I/\mu_I\) when \(\mu_I>0\).  Then the leakage functional obeys
\[
  L\le
  \bar w\left[1-\mu_I\left(1-\sqrt{n_a-1}\,R_I\right)\right],
\]
with the right-hand side clipped to \([0,\bar w]\) if the bracketed lower
bound on interception is negative.
\end{theorem}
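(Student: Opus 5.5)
The plan is to separate the leakage into a weight factor and a worst-case non-interception factor, and then control the worst attacker's interception by the mean and dispersion of \((I_1,\ldots,I_{n_a})\) through a Samuelson-type inequality. Since \(\mu_I R_I=s_I\), the bound to prove is
\[
  L\le \bar w\left[1-\mu_I+\sqrt{n_a-1}\,s_I\right].
\]

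\textbf{Step 1 (positivity and reduction to the minimum).} The sigmoid gives \(\pi_j\in(0,1)\), and the capabilities lie in \([0,1]\), so \(q_j\in[0.55,1]\). Hence \(w_j=(0.75+0.25\pi_j)q_j>0\). Each \(I_j\) is a maximum of sigmoids, so \(I_j\in(0,1)\) and \(1-I_j\ge 0\). Using \(w_j\ge0\), I bound each factor \(1-I_j\) by its maximum:
\[
  L=\frac1{n_a}\sum_j(1-I_j)w_j\le \Bigl(1-\min_j I_j\Bigr)\frac1{n_a}\sum_j w_j
  =\bar w\Bigl(1-\min_j I_j\Bigr).
\]
It therefore suffices to show \(\min_j I_j\ge \mu_I-\sqrt{n_a-1}\,s_I\).

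\textbf{Step 2 (Samuelson bound, the main step).} Fix an index \(j\). Because deviations from the mean sum to zero,
\[
  \sum_{k\ne j}(I_k-\mu_I)=\mu_I-I_j .
\]
Cauchy--Schwarz over the \(n_a-1\) remaining indices gives
\[
  (\mu_I-I_j)^2\le (n_a-1)\sum_{k\ne j}(I_k-\mu_I)^2=(n_a-1)\bigl(n_a s_I^2-(I_j-\mu_I)^2\bigr).
\]
Rearranging yields \(n_a(I_j-\mu_I)^2\le (n_a-1)n_a s_I^2\), that is, \(|I_j-\mu_I|\le\sqrt{n_a-1}\,s_I\). Here it matters that \(s_I\) uses the population normalization \(n_a^{-1}\), as in the statement. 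Taking \(j\) to be the minimizer gives the required lower bound on \(\min_j I_j\). Substituting it into Step 1 and writing \(s_I=\mu_I R_I\) (valid since \(\mu_I>0\)) gives the displayed certificate.

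\textbf{Step 3 (clipping).} Because \(I_j\ge0\) and \(w_j\ge0\), we have \(L\le\bar w\), and \(L\ge0\) trivially. If the bracketed lower bound \(\mu_I(1-\sqrt{n_a-1}R_I)\) on interception is negative, the right-hand side exceeds \(\bar w\) and is replaced by \(\bar w\), which remains valid. Any clipping at \(0\) from below is harmless for the same reason, since \(L\ge 0\) always holds. So clipping the right-hand side to \([0,\bar w]\) preserves the inequality.

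I expect Step 2 to be the only nonroutine point, mainly in getting the sharp constant \(\sqrt{n_a-1}\). The cruder bound \(|I_j-\mu_I|\le\sqrt{n_a}\,s_I\) would not suffice, and the Cauchy--Schwarz argument over the \(n_a-1\) complementary deviations is exactly what removes that slack.
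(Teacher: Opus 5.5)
Your proposal is correct and matches the paper's proof essentially step for step. It first reduces to \(L\le\bar w\,(1-\min_j I_j)\), then obtains the Samuelson-type bound \(\mu_I-\min_j I_j\le\sqrt{n_a-1}\,s_I\) by applying Cauchy's inequality to the \(n_a-1\) complementary deviations, and your explicit checks that \(w_j>0\), \(\mu_I>0\), and that clipping only loosens an already valid upper bound (since \(L\le\bar w\)) fill in details the paper leaves implicit.
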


\begin{proof}
Let \(I_{\min}=\min_j I_j\).  Because \(s_I^2\) is the population variance,
the one-sided finite-sample deviation bound
\[
  \mu_I-I_{\min}\le \sqrt{n_a-1}\,s_I
\]
holds.  To see this directly, choose an index \(j_0\) with
\(I_{j_0}=I_{\min}\) and set \(a=\mu_I-I_{\min}\).  The remaining
\(n_a-1\) deviations sum to \(a\), so by Cauchy's inequality their squared
deviations sum to at least \(a^2/(n_a-1)\).  Therefore
\[
  n_a s_I^2=\sum_j(I_j-\mu_I)^2
  \ge a^2+\frac{a^2}{n_a-1}
  =\frac{n_a}{n_a-1}a^2,
\]
which gives \(a\le\sqrt{n_a-1}\,s_I\).  Hence
\[
  I_{\min}\ge \mu_I-\sqrt{n_a-1}s_I
  =\mu_I(1-\sqrt{n_a-1}R_I).
\]
Since every \(I_j\ge I_{\min}\),
\[
  L=\frac{1}{n_a}\sum_j(1-I_j)w_j
  \le (1-I_{\min})\frac{1}{n_a}\sum_j w_j
  =\bar w(1-I_{\min}).
\]
Substituting the lower bound on \(I_{\min}\) gives the stated certificate.
This is the paper's data-level continuation of the Russian equalization
argument: lower dispersion of the defender result across attacker indices
directly tightens an auditable leakage upper bound.
\end{proof}

\begin{theorem}[Equalization--learning linkage]
Fix a seed, a finite payoff matrix \(U\in[0,1]^{4\times4}\), and a rendered
field resolution \(m\).  Let \((\bar p,\bar q)\) be the multiplicative-weights
strategies after \(T\) iterations with residual
\[
  \Gamma_T=\max_r e_r^\top U\bar q-\min_c\bar p^\top Ue_c.
\]
Let \(\widehat a_m\) be the held-out nearest-centroid regime accuracy computed
from \(N_m\) rendered rasters at resolution \(m\times m\).  For any
\(\delta\in(0,1)\), with probability at least \(1-\delta\) over the held-out
sample,
\[
\begin{aligned}
  &\max_r e_r^\top U\bar q-\min_c\bar p^\top Ue_c\le \Gamma_T,\\
  &L(\theta)\le
  \bar w\left[1-\mu_I\left(1-\sqrt{n_a-1}R_I\right)\right],\\
  &a_m\ge \widehat a_m-\sqrt{\frac{\log(2/\delta)}{2N_m}}.
\end{aligned}
\]
Thus one rendered state can be audited simultaneously by a game residual, an
equalization leakage certificate, and a visual-regime generalization interval.
\end{theorem}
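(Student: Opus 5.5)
The statement is a conjunction of three inequalities. The first two are deterministic and the third is probabilistic, so the plan is to treat each separately and then observe that only the last one consumes the failure probability $\delta$.

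\emph{First line.} This holds with equality by the definition of $\Gamma_T$ as the empirical residual at the reported $(\bar p,\bar q)$. If a quantitative envelope is wanted, the Saddle residual theorem adds $\Gamma_T\le 2(\log 4/(\eta T)+\eta/8)$. No randomness enters.

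\emph{Second line.} For the leakage inequality I would fix the tuned screen parameter $\theta$, the attacker formation and the capability draw. Then I would apply the Equalization leakage certificate verbatim to the interception scores $I_j=I_j(\theta)$ and weights $w_j$. This is again deterministic; I would also carry over the clipping convention to $[0,\bar w]$ for the case where $\mu_I(1-\sqrt{n_a-1}R_I)<0$.

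\emph{Third line.} For the accuracy bound I need to set up a probabilistic model, and this is the step I expect to be the main obstacle, since the paper has not yet stated one. I would assume the following:
\begin{itemize}
\item the nearest-centroid classifier at resolution $m$ is fixed by the training split, hence independent of the held-out set;
\item the $N_m$ held-out rasters are i.i.d.\ draws from the rendered-state distribution;
\item $a_m$ denotes the population accuracy of that fixed classifier.
\end{itemize}
Under these assumptions the correctness indicators $X_n\in\{0,1\}$ are independent with mean $a_m$, and $\widehat a_m=N_m^{-1}\sum_n X_n$. Hoeffding's inequality, applied exactly as in the Probe concentration lemma, gives
\[
\Pr\{|\widehat a_m-a_m|\ge t\}\le 2\exp(-2N_mt^2).
\]
Setting $t=\sqrt{\log(2/\delta)/(2N_m)}$ makes the right-hand side equal to $\delta$, and the lower half of the two-sided event yields $a_m\ge\widehat a_m-t$ with probability at least $1-\delta$. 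The one-sided bound would allow $\log(1/\delta)$, but I would keep the two-sided form to match the statement.

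\emph{Assembly.} The first two inequalities hold surely on the fixed seed, so intersecting them with the Hoeffding event costs no further probability and no union bound is needed. This gives joint probability at least $1-\delta$.
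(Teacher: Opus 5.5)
Your proposal is correct and follows the paper's proof: the first two inequalities are taken as deterministic (the paper cites the no-regret saddle analysis for the first and the preceding leakage certificate for the second), and the third is Hoeffding's inequality on the held-out correctness indicators with \(\epsilon=\sqrt{\log(2/\delta)/(2N_m)}\). The differences are minor. The paper uses the one-sided tail \(\Pr\{a_m-\widehat a_m\ge\epsilon\}\le\exp(-2N_m\epsilon^2)\), whereas you use the two-sided one. Your observation that the first line is an identity by the definition of \(\Gamma_T\), and your explicit independence assumptions on the held-out split, are more precise than the paper's wording.
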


\begin{proof}
The first inequality is exactly the multiplicative-weights saddle residual from
the no-regret analysis of \citet[Theorem~1 and Corollary~4]{freund1999adaptive}
and \citet[Theorem~2.2]{cesabianchi2006prediction}.  The second inequality is
the preceding leakage-certificate theorem, which adapts the Russian generalized equalization principle
\citep[Lemma~1, Remark~1, Theorem~1]{perevozchikov2018attackdefense} to
attacker-index leakage.  For the third inequality, define \(Z_i=1\) when the
nearest-centroid classifier is correct on held-out raster \(i\) and \(Z_i=0\)
otherwise.  Then \(\widehat a_m=N_m^{-1}\sum_i Z_i\), \(a_m=\mathbb E Z_i\),
and Hoeffding's bounded-sum theorem \citep[Theorem~1]{hoeffding1963probability}
gives
\[
  \Pr\{a_m-\widehat a_m\ge \epsilon\}\le \exp(-2N_m\epsilon^2).
\]
Setting \(\epsilon=\sqrt{\log(2/\delta)/(2N_m)}\) gives the displayed lower
confidence bound.  The three inequalities share the same seed-indexed rendered
state but audit different objects: strategic optimality, pressure equalization,
and visual pattern recovery.
\end{proof}

\begin{proposition}[Scale-normalized raster risk]
Let \(m\) be the raster side and let \(K_\sigma(z)=\exp(-\|z\|_2^2/(2\sigma^2))\)
be the Gaussian kernel used to map image-normalized positions to pixels.  If the
pixel bandwidth \(\sigma_{\rm pix}\) is held fixed while \(m\) increases, then
the continuous image-plane bandwidth is
\[
  h_m^{\rm fix}=\frac{\sigma_{\rm pix}}{m-1},
\]
so \(h_m^{\rm fix}\to0\).  For a baseline side \(m_0=32\), the
scale-normalized family
\[
  \sigma_m(\lambda)=\lambda\,\sigma_{\rm pix}\frac{m-1}{m_0-1},
  \qquad
  h_m(\lambda)=\lambda\frac{\sigma_{\rm pix}}{m_0-1},
\]
keeps the image-plane bandwidth independent of \(m\).  With held-out
classification accuracy \(\widehat a_{m,\lambda}\), hotspot error
\(\widehat e_{m,\lambda}\), and joint loss
\[
  \widehat{\mathcal L}_{m,\lambda}
  =(1-\widehat a_{m,\lambda})+\widehat e_{m,\lambda},
\]
the selected encoder
\[
  \widehat\lambda_m\in
  \arg\min_{\lambda\in\{0,0.25,0.35,0.50,0.75,1.00,1.25,1.50\}}
  \widehat{\mathcal L}_{m,\lambda}
\]
has empirical joint loss no larger than the fixed-pixel encoder
\(\lambda=0\).  Its reported gains are
\[
G^a_m=\widehat a_{m,\widehat\lambda_m}-\widehat a_{m,0},\quad
G^H_m=\widehat e_{m,0}-\widehat e_{m,\widehat\lambda_m},\quad
G^{\mathcal L}_m=\widehat{\mathcal L}_{m,0}
-\widehat{\mathcal L}_{m,\widehat\lambda_m}\ge0 .
\]
\end{proposition}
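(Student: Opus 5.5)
The proposition has three parts: a change-of-variables fact about bandwidths, a fact about the scale-normalized family, and an argmin-dominance statement. I would prove them in that order.

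\emph{Step 1: the fixed-pixel bandwidth.} The rasterizer maps an image-normalized coordinate \(z\in[0,1]^2\) to pixel coordinates \(u=(m-1)z\), so one unit of image plane spans \(m-1\) pixels. Hence
\[
K_{\sigma_{\rm pix}}(u-u')=\exp\!\bigl(-(m-1)^2\|z-z'\|_2^2/(2\sigma_{\rm pix}^2)\bigr)=K_{h}(z-z'),
\qquad h=\frac{\sigma_{\rm pix}}{m-1}.
\]
This gives \(h_m^{\rm fix}=\sigma_{\rm pix}/(m-1)\), which tends to \(0\) as \(m\to\infty\). This is the ``shrinking image-plane bandwidth'' diagnosis.

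\emph{Step 2: the scale-normalized family.} Substituting \(\sigma_m(\lambda)=\lambda\sigma_{\rm pix}(m-1)/(m_0-1)\) into the same identity gives \(h_m(\lambda)=\sigma_m(\lambda)/(m-1)=\lambda\sigma_{\rm pix}/(m_0-1)\). The factor \(m-1\) cancels, so the image-plane bandwidth does not depend on \(m\). At \(m=m_0\) and \(\lambda=1\) this recovers the fixed-pixel encoder.

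\emph{Step 3: the \(\lambda=0\) label.} The grid value \(\lambda=0\) does not literally give a zero-width kernel. I would state it explicitly as the convention of the artifact: \(\lambda=0\) indexes the fixed-pixel encoder \(\sigma_m=\sigma_{\rm pix}\), so that \(\widehat{\mathcal L}_{m,0}\) is its empirical joint loss. I expect this to be the main obstacle, since without the convention the comparison with ``the fixed-pixel encoder'' is not even well posed.

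\emph{Step 4: dominance and the gains.} With the convention in place, \(0\) belongs to the finite candidate set \(\Lambda\). Because \(\widehat\lambda_m\) minimizes \(\widehat{\mathcal L}_{m,\cdot}\) over \(\Lambda\),
\[
\widehat{\mathcal L}_{m,\widehat\lambda_m}\le\widehat{\mathcal L}_{m,0},
\qquad\text{so}\qquad G^{\mathcal L}_m\ge 0.
\]
The other two gains \(G^a_m\) and \(G^H_m\) are reported quantities and carry no sign claim. Expanding the joint loss gives the identity \(G^{\mathcal L}_m=G^a_m+G^H_m\). This shows that one of the two component gains may be negative, but their sum cannot be.

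I would end with a caveat. The selection is made on the same held-out sample on which the losses are reported, so the guarantee is purely empirical. A population statement would require a union bound over the eight grid values, in the style of the Hoeffding argument in the equalization--learning linkage theorem, and that is not claimed here.
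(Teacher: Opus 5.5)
Your proposal is correct and matches the paper's proof: the same pixel map \(z\mapsto(m-1)z\) yields both bandwidth formulas, and \(G^{\mathcal L}_m\ge0\) follows because \(\lambda=0\) lies in the finite candidate set over which \(\widehat\lambda_m\) minimizes the empirical joint loss. Your explicit convention that \(\lambda=0\) labels the fixed-pixel encoder (rather than the degenerate \(\sigma_m(0)=0\) given by the displayed formula) and the identity \(G^{\mathcal L}_m=G^a_m+G^H_m\) are clarifications the paper leaves implicit, not a different route.
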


\begin{proof}
The pixel grid maps normalized coordinates \(z\in[0,1]^2\) to
\((m-1)z\).  A Gaussian with fixed pixel standard deviation
\(\sigma_{\rm pix}\) therefore has normalized bandwidth
\(\sigma_{\rm pix}/(m-1)\), which decreases to zero with increasing \(m\).  The
scale-normalized definition multiplies the pixel bandwidth by
\((m-1)/(m_0-1)\), so after converting back to normalized coordinates the
bandwidth is \(\lambda\sigma_{\rm pix}/(m_0-1)\), independent of \(m\).
The final inequality is the finite-class empirical-risk argument: the candidate
set explicitly contains the fixed-pixel baseline \(\lambda=0\), and
\(\widehat\lambda_m\) is chosen to minimize the displayed empirical joint loss
over that same set.  Thus the selected joint loss cannot exceed the baseline
joint loss.  The experiment uses this proposition diagnostically: the
fixed-pixel 128-by-128 raster failed because its image-plane kernel became too
sparse, not because a larger visual field is intrinsically worse.
\end{proof}

\begin{lemma}[Raster sparsity under fixed pixel bandwidth]
Fix a threshold \(\tau\in(0,1)\), \(n\) rendered points in one channel, and a
Gaussian raster channel normalized by its channel maximum.  Let
\[
  A_{m,\lambda}(\tau)
  =\frac{1}{m^2}\sum_{z\in[m]^2}
  \mathbf 1\{\phi_{m,\lambda}(z)>\tau\}
\]
be the active-cell ratio.  For the fixed-pixel encoder
\(\sigma_m=\sigma_{\rm pix}\), there is a constant
\[
  C_\tau=2\pi\sigma_{\rm pix}^2\log(n/\tau)
\]
such that
\[
  A_{m,0}(\tau)\le \frac{nC_\tau}{m^2}+O(m^{-1}).
\]
For the scale-normalized encoder
\(\sigma_m(\lambda)=\lambda\sigma_{\rm pix}(m-1)/(m_0-1)\), the corresponding
upper envelope is \(O(1)\) in \(m\).  Thus increasing raster resolution with
fixed pixel bandwidth can reduce active-cell support even when the underlying
continuous image geometry is unchanged.
\end{lemma}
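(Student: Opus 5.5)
The plan is to model the raster channel explicitly. Write \(\psi_m(z)=\sum_{k=1}^n K_{\sigma_m}(z-p_k)\), where \(p_k=(m-1)y_k\) are the pixel positions of the \(n\) rendered points, and let \(\phi_{m,\lambda}=\psi_m/\max_z\psi_m\) be the channel-normalized raster. Two observations drive the argument.

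\emph{Lower bound on the normalizer.} Every center \(p_k\) lies within \(\sqrt2/2\) of some grid cell. Hence \(\max_z\psi_m\ge K_{\sigma_m}(\sqrt2/2)=e^{-1/(4\sigma_m^2)}\). For fixed \(\sigma_m=\sigma_{\rm pix}\) this is a positive constant \(c_0\) independent of \(m\). I would then absorb the rounding loss into the threshold, or, more cleanly, into the \(O(m^{-1})\) term.

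\emph{Localization of active cells.} If \(\phi(z)>\tau\), then \(\psi_m(z)>\tau\max\psi_m\). Since \(\psi_m(z)\le n\max_k K(z-p_k)\), some \(k\) satisfies \(K(z-p_k)>\tau c_0/n\). So every active cell lies in the union of discs
\[
\|z-p_k\|^2<2\sigma_{\rm pix}^2\log\bigl(n/(\tau c_0)\bigr).
\]
With the normalizer treated as order one (the idealized case \(c_0\approx1\), matching the stated constant), each disc has area \(2\pi\sigma_{\rm pix}^2\log(n/\tau)=C_\tau\).

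\emph{Counting.} The number of lattice points in a disc of radius \(R\) is \(\pi R^2+O(R)\). This is a bound independent of \(m\) because \(R\) is fixed. Summing over the \(n\) discs and dividing by \(m^2\) gives \(A_{m,0}(\tau)\le nC_\tau/m^2+O(m^{-2})\). That is at least as strong as the claimed \(O(m^{-1})\); the slack can also absorb boundary effects and the discrepancy between \(c_0\) and \(1\).

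\emph{Scale-normalized encoder.} Here \(\sigma_m\asymp m\), so the same discs have radius \(\Theta(m)\) and area \(\Theta(m^2)\). Dividing by \(m^2\) gives an \(O(1)\) envelope. Equivalently, in normalized coordinates the bandwidth is the constant \(h_m(\lambda)\) from the Scale-normalized raster risk proposition, so the active region is a fixed subset of \([0,1]^2\) and the active-cell ratio converges to its area. I would also note that the envelope is bounded below, since the disc around the maximizing point has area \(\Theta(m^2)\). This contrast is what justifies the final sentence of the lemma.

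The main obstacle is the exact constant. The stated \(C_\tau\) implicitly assumes the channel maximum is at least comparable to one, i.e.\ \(\log(n/(\tau c_0))\approx\log(n/\tau)\). My first attempt would be to treat the normalization as dividing by \(\max_k\psi_m(p_k)\ge1\) evaluated at the continuous centers. I would then argue that the grid maximum differs from this by a factor \(1-O(m^{-1})\) in the regime where the lemma is applied. Failing that, I would state \(C_\tau\) with \(\log(n/(\tau c_0))\) and remark that it reduces to the displayed form up to the lower-order term.
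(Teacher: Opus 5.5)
Your proof is correct and takes the same route as the paper: active cells must lie in $n$ discs of radius about $\sigma_{\rm pix}\sqrt{2\log(n/\tau)}$, a lattice count divided by $m^2$ gives the bound, and under the scale-normalized encoder the radius grows like $m$. You are in fact more careful than the paper, whose step ``some Gaussian term exceeds $\tau/n$'' silently assumes the raster maximum is at least $1$. Your ``main obstacle'' goes away once computed: $\log(1/c_0)=1/(4\sigma_{\rm pix}^2)$ adds exactly $\pi/2$ to each disc area, hence only $n\pi/(2m^2)=O(m^{-2})$ to the bound. So use that fallback directly and drop your first attempt, because its claim that the grid maximum is within a factor $1-O(m^{-1})$ of the continuous one is false when the bandwidth is fixed in pixel units: a half-pixel offset costs the fixed factor $e^{-1/(4\sigma_{\rm pix}^2)}$ for every $m$.
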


\begin{proof}
If a normalized Gaussian mixture exceeds \(\tau\) at a pixel, then at least one
of its \(n\) Gaussian terms exceeds \(\tau/n\).  For a fixed-pixel Gaussian this
requires the pixel to lie inside a disk of radius
\(\sigma_{\rm pix}\sqrt{2\log(n/\tau)}\) around one of the \(n\) rendered
centers.  The union of these disks has area at most
\(n\,2\pi\sigma_{\rm pix}^2\log(n/\tau)\) pixels, plus a boundary discretization
term \(O(m)\).  Dividing by \(m^2\) gives the displayed bound.  With
scale-normalized bandwidth, the radius is multiplied by \((m-1)/(m_0-1)\), so
the disk area grows as \(m^2\) and the normalized active-cell envelope no longer
collapses.  This is the mathematical mechanism behind the declining active-cell
curve in Figure~\ref{fig:field-resolution}.
\end{proof}

\begin{lemma}[Hotspot argmax stability]
Let \(S(z)=\phi^a(z)+0.45\phi^0(z)-0.85\phi^d(z)\) be the continuous score
field and let \(z^\star\) be its maximizer.  Suppose there is a margin
\(\gamma_\epsilon>0\) outside an \(\epsilon\)-ball,
\[
  S(z^\star)-\sup_{\|z-z^\star\|_2>\epsilon}S(z)\ge \gamma_\epsilon .
\]
If a raster score \(\widehat S_m\) satisfies
\(\|\widehat S_m-S\|_\infty\le\gamma_\epsilon/2\) on the grid interpolation,
then every nearest maximizer \(\widehat z_m\) of \(\widehat S_m\) lies within
\(\epsilon\) of \(z^\star\).
\end{lemma}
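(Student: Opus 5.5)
The argument is a standard argmax-stability comparison, carried out by contradiction. Let \(\widehat z_m\) be any nearest maximizer of \(\widehat S_m\). Here the score \(\widehat S_m\) is read through the grid interpolation, so it is defined on the same domain as \(S\). Suppose, to reach a contradiction, that \(\|\widehat z_m-z^\star\|_2>\epsilon\).

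The key chain of inequalities compares \(\widehat S_m\) at the two points \(\widehat z_m\) and \(z^\star\):
\[
\widehat S_m(\widehat z_m)\le S(\widehat z_m)+\tfrac{\gamma_\epsilon}{2}
\le S(z^\star)-\gamma_\epsilon+\tfrac{\gamma_\epsilon}{2}
= S(z^\star)-\tfrac{\gamma_\epsilon}{2}
\le \widehat S_m(z^\star).
\]
The first and last steps use the uniform bound \(\|\widehat S_m-S\|_\infty\le\gamma_\epsilon/2\). The middle step uses the margin hypothesis, since \(\widehat z_m\) lies outside the \(\epsilon\)-ball. The chain shows that \(z^\star\) does at least as well as \(\widehat z_m\) for \(\widehat S_m\).

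To turn this into a contradiction, I would first tighten the hypotheses slightly. One option is to use a strict margin, replacing \(\ge\gamma_\epsilon\) by \(>\gamma_\epsilon\), or equivalently a strict sup-norm bound \(<\gamma_\epsilon/2\). The other option is to conclude only \(\|\widehat z_m-z^\star\|_2\le\epsilon\) up to ties. With either choice the chain becomes strict, \(\widehat S_m(\widehat z_m)<\widehat S_m(z^\star)\), which contradicts the maximality of \(\widehat z_m\). Hence \(\|\widehat z_m-z^\star\|_2\le\epsilon\).

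The main obstacle is interpretive rather than analytic. The lemma says "nearest maximizer" and "on the grid interpolation", but the grid maximizer may be taken only over grid points, and \(z^\star\) need not be a grid point. If \(\widehat z_m\) is an argmax over the grid only, the comparison point \(z^\star\) is unavailable. I would handle this in one of two ways. First, I can define \(\widehat S_m\) as the interpolated continuous field, so that \(z^\star\) is admissible. Alternatively, I can compare against the grid point \(z_m^\circ\) nearest \(z^\star\). This requires a modulus-of-continuity estimate \(S(z^\star)-S(z_m^\circ)\le \omega(\sqrt2/(m-1))\), which is available because \(S\) is a finite sum of Lipschitz exponential kernels. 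Under this alternative, the conclusion holds once \(\gamma_\epsilon\) exceeds the sup-norm error plus that discretization term. In the writeup I would state the interpolated version as the primary argument and remark on the grid-only variant, which adds the \(O(m^{-1})\) slack.
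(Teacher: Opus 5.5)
\textbf{Equality case.} Your comparison chain is the same one the paper uses. The paper writes it as \(\widehat S_m(z^\star)-\widehat S_m(z)\ge S(z^\star)-S(z)-2\|\widehat S_m-S\|_\infty\ge 0\) for every \(z\) outside the \(\epsilon\)-ball. Like the paper, you read \(\widehat S_m\) as the interpolated field, so \(z^\star\) is an admissible comparison point.

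The gap is in the equality case. You propose to strengthen the hypotheses (strict margin or strict sup-norm bound) or to weaken the conclusion to ``up to ties.'' Neither is needed, because the word ``nearest'' in the statement is exactly the tie-breaking device. Your non-strict chain already gives \(\widehat S_m(z^\star)\ge\widehat S_m(\widehat z_m)=\max\widehat S_m\), so \(z^\star\) is itself a maximizer of \(\widehat S_m\). A maximizer chosen nearest to \(z^\star\) is then \(z^\star\) itself, at distance \(0\le\epsilon\). This contradicts \(\|\widehat z_m-z^\star\|_2>\epsilon\) with no strictness anywhere.

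This is how the paper closes its proof: a maximizer outside the ball can only tie one inside it, and the nearest-maximizer choice picks the inside one. Your remark that under the ``up to ties'' option ``the chain becomes strict'' is also not right. The chain stays non-strict; it is the selection rule that excludes the tie. As written, your plan proves a modified lemma rather than the stated one, even though the stated one follows from your own inequality with this one extra observation.

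\textbf{Grid-only variant.} This remark goes beyond the paper, which implicitly evaluates the interpolated \(\widehat S_m\) at \(z^\star\) and never considers a grid-restricted argmax. If you keep it, state the condition precisely. The sup-norm error enters twice: once at \(\widehat z_m\), and once at the grid point \(z_m^\circ\) nearest \(z^\star\). So you need \(2\|\widehat S_m-S\|_\infty+\omega\bigl(\|z^\star-z_m^\circ\|_2\bigr)<\gamma_\epsilon\), not the sup-norm error plus the discretization term.

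In particular, under the lemma's own hypothesis \(\|\widehat S_m-S\|_\infty\le\gamma_\epsilon/2\), this leaves no room for the discretization term. The nearest-maximizer tie-break also no longer helps there, since \(z^\star\) need not be a grid point. That is why the interpolated reading, which both you and the paper adopt, is the one under which the lemma holds as stated.
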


\begin{proof}
For any \(z\) with \(\|z-z^\star\|_2>\epsilon\),
\[
  \widehat S_m(z^\star)-\widehat S_m(z)
  \ge S(z^\star)-S(z)-2\|\widehat S_m-S\|_\infty
  \ge \gamma_\epsilon-\gamma_\epsilon=0 .
\]
If the interpolation error is strictly below \(\gamma_\epsilon/2\), the
inequality is strict outside the \(\epsilon\)-ball; with equality, a maximizer
outside the ball can only tie a maximizer inside it.  Choosing a nearest
maximizer yields the stated stability certificate.  The fixed-pixel failure mode
is now explicit: as \(h_m^{\rm fix}\) changes with \(m\), the raster is no
longer approximating the same continuous score field \(S\).  Scale-normalized
bandwidth restores the fixed continuous field before the argmax comparison is
made.
\end{proof}

\begin{theorem}[Finite encoder selection bound]
Let \(\Lambda\) be the finite encoder family used in the field audit, including
the fixed-pixel baseline \(\lambda=0\).  For a held-out state \(i\), define the
bounded joint loss
\[
  X_i(\lambda)=\mathbf 1\{\widehat y_i(\lambda)\ne y_i\}
  +\|\widehat z_i(\lambda)-z_i^\star\|_2,
  \qquad 0\le X_i(\lambda)\le B,\quad B=1+\sqrt2 .
\]
Let \(\widehat{\mathcal L}(\lambda)=N^{-1}\sum_iX_i(\lambda)\) and
\({\mathcal L}(\lambda)=\mathbb E X_i(\lambda)\).  If
\(\widehat\lambda\in\arg\min_{\lambda\in\Lambda}
\widehat{\mathcal L}(\lambda)\), then with probability at least \(1-\delta\),
\[
  {\mathcal L}(\widehat\lambda)
  \le \min_{\lambda\in\Lambda}{\mathcal L}(\lambda)
  +2B\sqrt{\frac{\log(2|\Lambda|/\delta)}{2N}} .
\]
Empirically, the baseline comparison is sharper:
\[
  \widehat{\mathcal L}(0)-\widehat{\mathcal L}(\widehat\lambda)
  =G^{\mathcal L}_m\ge0 .
\]
\end{theorem}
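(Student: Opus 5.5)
The argument is the standard finite-class uniform convergence step followed by the usual empirical-risk-minimization sandwich. Throughout, the held-out states are taken as i.i.d.\ draws, and the family \(\Lambda\) is fixed before the held-out sample is seen. Then, for each fixed \(\lambda\in\Lambda\), the losses \(X_1(\lambda),\ldots,X_N(\lambda)\) are independent and lie in \([0,B]\).

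\emph{Range of the loss.} The indicator term lies in \([0,1]\). Both \(\widehat z_i(\lambda)\) and \(z_i^\star\) lie in \([0,1]^2\), so their Euclidean distance is at most \(\sqrt2\). Hence \(0\le X_i(\lambda)\le B=1+\sqrt2\).

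\emph{Uniform deviation.} Hoeffding's bounded-sum inequality \citep[Theorem~1]{hoeffding1963probability}, applied exactly as in the probe-concentration lemma but with range \(B\), gives
\[
\Pr\{|\widehat{\mathcal L}(\lambda)-\mathcal L(\lambda)|\ge\epsilon\}\le 2\exp(-2N\epsilon^2/B^2)
\]
for each fixed \(\lambda\). A union bound over the \(|\Lambda|\) encoders, with \(\epsilon=B\sqrt{\log(2|\Lambda|/\delta)/(2N)}\), shows that with probability at least \(1-\delta\),
\[
\sup_{\lambda\in\Lambda}|\widehat{\mathcal L}(\lambda)-\mathcal L(\lambda)|\le\epsilon .
\]

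\emph{Sandwich.} On this event, let \(\lambda^\circ\) be a population minimizer over \(\Lambda\); it exists because \(\Lambda\) is finite. Then
\[
\mathcal L(\widehat\lambda)\le\widehat{\mathcal L}(\widehat\lambda)+\epsilon\le\widehat{\mathcal L}(\lambda^\circ)+\epsilon\le\mathcal L(\lambda^\circ)+2\epsilon .
\]
The middle step uses that \(\widehat\lambda\) is an empirical minimizer. This is exactly the displayed bound with constant \(2B\).

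\emph{Empirical baseline comparison.} This part is deterministic and needs no probability. Since \(0\in\Lambda\) and \(\widehat\lambda\) minimizes \(\widehat{\mathcal L}\) over \(\Lambda\), we have \(\widehat{\mathcal L}(\widehat\lambda)\le\widehat{\mathcal L}(0)\). The difference \(\widehat{\mathcal L}(0)-\widehat{\mathcal L}(\widehat\lambda)\) is \(G^{\mathcal L}_m\) by the definition in the Scale-normalized raster risk proposition, which also already gives \(G^{\mathcal L}_m\ge0\).

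One point must be reconciled there. That proposition writes the joint loss as \((1-\widehat a)+\widehat e\). Provided \(\widehat e\) is the mean hotspot distance over the same held-out states, this equals the average of the \(X_i\), because the average of the misclassification indicators is \(1-\widehat a\).

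\emph{Main obstacle.} The hardest point is justifying independence. The i.i.d.\ assumption on held-out states, and the requirement that \(\Lambda\) and any centroids be fitted on disjoint data, must be stated explicitly. Otherwise selecting \(\widehat\lambda\) on the same sample invalidates the per-\(\lambda\) Hoeffding step. I would state it as a modeling assumption, in the same way the probe-concentration lemma adopts its independence approximation.
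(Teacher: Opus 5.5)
Your proposal is correct and matches the paper's proof step for step. Both use per-$\lambda$ Hoeffding with range $B$, a union bound over $\Lambda$ at $\epsilon=B\sqrt{\log(2|\Lambda|/\delta)/(2N)}$, the sandwich $\mathcal L(\widehat\lambda)\le\widehat{\mathcal L}(\widehat\lambda)+\epsilon\le\widehat{\mathcal L}(\lambda^\circ)+\epsilon\le\mathcal L(\lambda^\circ)+2\epsilon$, and the deterministic baseline comparison from $0\in\Lambda$. Your extra checks make explicit what the paper leaves implicit: the $\sqrt2$ diameter bound, the identification of $(1-\widehat a)+\widehat e$ with the average of the $X_i$, and the i.i.d./disjoint-fitting assumption. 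One phrase is slightly off, though. Selecting $\widehat\lambda$ on the held-out sample is harmless, because the union bound is exactly what handles that. The real requirement is only that $\Lambda$ and the centroids not depend on that sample.
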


\begin{proof}
Hoeffding's inequality for bounded variables gives, for a fixed \(\lambda\),
\[
  \Pr\{|\widehat{\mathcal L}(\lambda)-{\mathcal L}(\lambda)|>\epsilon\}
  \le 2\exp\{-2N\epsilon^2/B^2\}.
\]
A union bound over the finite set \(\Lambda\) gives uniform deviation at most
\(B\sqrt{\log(2|\Lambda|/\delta)/(2N)}\).  On that event,
\[
{\mathcal L}(\widehat\lambda)
\le \widehat{\mathcal L}(\widehat\lambda)+\epsilon
\le \widehat{\mathcal L}(\lambda^\star)+\epsilon
\le {\mathcal L}(\lambda^\star)+2\epsilon,
\]
where \(\lambda^\star\) minimizes population risk over \(\Lambda\).  The
empirical gain statement follows because \(\lambda=0\) is included in
\(\Lambda\) and \(\widehat\lambda\) minimizes empirical joint loss.  Figure
\ref{fig:field-optimization} is the numerical audit of this theorem: at
128-by-128, \(G^{\mathcal L}_{128}=\FieldOptimizedOneTwentyEightJointGain{}\)
and both components of the joint loss improve.
\end{proof}

In the seed-17 line-attack counterfactual, the fixed ring has leakage
\SeedSeventeenFixedLeakage{}, hotspot score \SeedSeventeenFixedHotspot{}, and
mean interception \(\mu_I=\SeedSeventeenFixedMeanIntercept{}\), interception
dispersion \(R_I=\SeedSeventeenFixedInterceptCv{}\), and certified leakage
upper bound \SeedSeventeenFixedLeakBound{}.  The tuned screen with offset
\(\Delta x=\SeedSeventeenTunedOffset{}\) and span
\(h=\SeedSeventeenTunedSpan{}\) reduces leakage to
\SeedSeventeenTunedLeakage{}, hotspot score to \SeedSeventeenTunedHotspot{},
raises mean interception to \(\SeedSeventeenTunedMeanIntercept{}\), reduces
\(R_I\) to \SeedSeventeenTunedInterceptCv{}, and tightens the certificate to
\SeedSeventeenTunedLeakBound{}.  The row-level evidence is in
\texttt{anc/results/tables/seed17\_tactical\_audit.csv}.

\paragraph{Alternative target localizers.}
The server audit compares six target-localization rules before they are turned
into screen positions: asset-only, attacker centroid, dataset prior,
prior-centroid mixture, pattern-fusion field, and posterior field.  The
posterior field uses both attacker pressure and defender coverage, whereas
pattern fusion uses attacker geometry plus the VisDrone spatial prior.  These
are deterministic policy summaries rather than bounded-rational response
models; quantal-response relaxations \citep{mckelvey1995qre} can be added on
top of the same payoff matrix, but the present audit keeps the comparison
transparent.  On
\RemoteStrategyScenarios{} dataset-conditioned samples, the posterior localizer
has mean error \RemoteStrategyPosteriorError{} against the oracle hotspot, while
pattern fusion has \RemoteStrategyFusionError{}.  However, the best
single-frame screen payoff is \RemoteStrategyBestValue{} for
\RemoteStrategyBest{}, showing that the most accurate hotspot estimate is not
necessarily the best one-step screen when exposure and geometry regularization
are included.  This motivates the repeated-game layer rather than treating
target identification as a single isolated argmax.

\paragraph{Server-side scale audit.}
The dataset-backed local run is intentionally small enough to reproduce in the
source package.  To check that the finite-game implementation scales, we also
include a CUDA batched audit for the remote server.  The server script supports
6v6, 8v8, 16v16, and 32v32 random capability tensors.  The available remote
audit reports \RemoteGpuScenarios{} synthetic games up to
\RemoteGpuMaxSwarmPair{} and solves each 4-by-4 formation game for
\RemoteGpuSteps{} multiplicative-weights iterations with batch size
\RemoteGpuBatch{}.  It reports mean value \RemoteGpuMeanValue{}, mean saddle
gap \RemoteGpuMeanGap{}, maximum saddle gap \RemoteGpuMaxGap{}, and peak
allocated memory \RemoteGpuPeakGb{} GB in \RemoteGpuSeconds{} seconds. Hardware
identifiers and live telemetry are kept in the ancillary audit files rather
than treated as a contribution.

\begin{figure}[H]
\centering
\includegraphics[width=\linewidth]{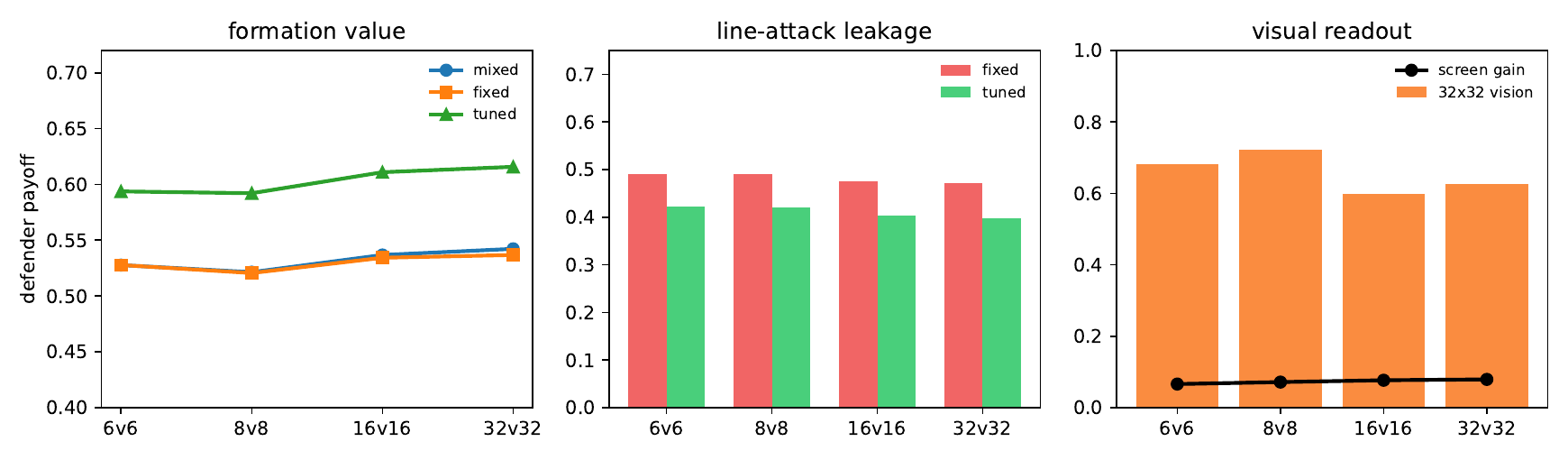}
\caption{Multi-scale formation audit from 6v6 to \MaxSwarmPair{}.  The left
panel compares mixed, fixed-screen, and tuned-screen values; the center panel
shows fixed versus tuned leakage under a line attack; the right panel reports
the 32-by-32 visual-regime readout and tuned-screen gain.  At \MaxSwarmPair{},
the tuned screen value is \ThirtyTwoTunedScreenValue{} and the visual readout
accuracy is \ThirtyTwoVisualAccuracy{}.}
\label{fig:multiscale}
\end{figure}

\section{Repeated System Game}

The single-frame game asks whether a rendered state is secure.  The larger
question is whether a virtual defensive system remains sustainable over repeated
rounds.  We model this as a finite-horizon Markov game
\citep{shapley1953stochastic,littman1994markov}.  The normalized state is
\[
s_t=(D_t,I_t,F_t,J_t,R_t,P_t,O_t,B_t,G_t,M_t,E_t)\in[0,1]^{11},
\]
where \(D_t\) and \(I_t\) are drone-pressure and interceptor-stock variables,
\(F_t\) is operator fatigue, \(J_t\) is electronic-warfare adaptation,
\(R_t\) is radar exposure, \(P_t\) is a public fear/stress index, \(O_t\) is insurance/oil
pressure, \(B_t\) is budget justification, \(G_t\) is dataset growth, \(M_t\)
is model improvement, and \(E_t\) is the entropy budget.  The entropy budget is
an information-diversity variable: policies that keep observing diverse
patterns preserve learning capacity, while overexposed or jammed states consume
it.  This follows the same spirit as entropy-regularized sequential decision
models \citep{geist2019regularized}, but here it is a transparent accounting
state rather than a learned controller.

A policy \(\pi\) is summarized by six abstract parameters:
base security \(b_\pi\), operational intensity \(\iota_\pi\), exposure
\(r_\pi\), target error \(\epsilon_\pi\), entropy production \(h_\pi\), and data
capture \(c_\pi\).  The attacker mode \(m_t\) supplies pressure \(a_t\), jamming
\(j_t\), social stress \(p_t\), cost \(k_t\), and pattern entropy \(q_t\).  The
one-step security score is
\[
\begin{aligned}
S_t^\pi=\Pi_{[0,1]}\!\big[&
b_\pi +0.16M_t+0.06\sqrt{G_t}+0.06E_t h_\pi\\
&-0.18J_tj_t-0.11F_t-0.18(0.35-I_t)_+
-0.16\epsilon_\pi a_t\big],
\end{aligned}
\]
and leakage is \(L_t^\pi=\Pi_{[0,1]}[a_tD_t(1-S_t^\pi)]\).  The key transition
coordinates are clipped coordinate-wise by \(\Pi_{[0,1]}\):
\[
\begin{aligned}
F_{t+1}&=\Pi_{[0,1]}[0.86F_t+0.11\iota_\pi k_t+0.055L_t^\pi],\\
J_{t+1}&=\Pi_{[0,1]}[0.83J_t+0.12j_t r_\pi+0.035L_t^\pi],\\
R_{t+1}&=\Pi_{[0,1]}[0.78R_t+0.17r_\pi(0.35+0.65a_t)],\\
P_{t+1}&=\Pi_{[0,1]}[0.82P_t+0.22L_t^\pi+0.06R_t],\\
O_{t+1}&=\Pi_{[0,1]}[0.85O_t+0.18P_t+0.16L_t^\pi],\\
B_{t+1}&=\Pi_{[0,1]}[0.80B_t+0.18P_t+0.14O_t+0.10G_t-0.06R_t],\\
G_{t+1}&=\Pi_{[0,1]}[0.90G_t+0.14c_\pi(0.35+q_t)(0.30+L_t^\pi+0.35R_{t+1})],\\
M_{t+1}&=\Pi_{[0,1]}[0.92M_t+0.10\log(1+4G_{t+1})E_t-0.035J_t],\\
E_{t+1}&=\Pi_{[0,1]}[0.88E_t+0.10h_\pi+0.06q_t-0.08J_t].
\end{aligned}
\]
Other stock updates follow the same accounting rule: high intensity and
attacker cost consume interceptor stock, while budget justification partially
restores it; security reduces drone pressure, while attacker pressure
replenishes it.

The repeated payoff is no longer only frame security.  It is
\[
\begin{aligned}
u_t^\pi={}&0.30S_t^\pi+0.16H_t+0.12B_{t+1}+0.12G_{t+1}
          +0.12M_{t+1}+0.08E_{t+1}\\
&-0.10F_{t+1}-0.08J_{t+1}-0.07R_{t+1}
-0.08P_{t+1}-0.07O_{t+1}-0.04(0.35-I_{t+1})_+,
\end{aligned}
\]
where \(H_t=0.55(D_{t+1}+I_{t+1})/2+0.25M_{t+1}+0.20E_{t+1}\) is a
system-health term.  The evaluated value is
\[
V^\pi=\mathbb{E}\left[\frac{\sum_{t=0}^{T-1}\gamma^t u_t^\pi}
{\sum_{t=0}^{T-1}\gamma^t}\right], \qquad \gamma=0.96.
\]

\begin{proposition}[Bounded repeated-game estimator]
For any fixed policy \(\pi\), attacker-mode distribution, and horizon \(T\),
the projected transition map keeps all coordinates in \([0,1]\), and the
discount-normalized trajectory return \(Y^\pi\) lies in \([-1,1]\).  Therefore
the Monte Carlo estimator \(\widehat V_N^\pi=N^{-1}\sum_{n=1}^NY_n^\pi\) obeys
\[
\Pr\{|\widehat V_N^\pi-V^\pi|\ge \eta\}\le
2\exp\!\left(-\frac{N\eta^2}{2}\right).
\]
\end{proposition}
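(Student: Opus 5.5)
The argument has three steps: coordinate invariance of the projected transition, a range bound on the normalized return, and Hoeffding's inequality for the Monte Carlo average.

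\textbf{Invariance of \([0,1]^{11}\).} I would argue by induction on \(t\). Each displayed update (\(F,J,R,P,O,B,G,M,E\)), and each undisplayed stock update for \(D\) and \(I\), is the clipping \(\Pi_{[0,1]}\) applied to a finite real expression. So whatever the sign or size of the argument, the new coordinate lies in \([0,1]\). The initial state is normalized by assumption. The score \(S_t^\pi\) and leakage \(L_t^\pi\) are clipped as well, so they also lie in \([0,1]\). One point needs checking: some arguments, such as \(\log(1+4G_{t+1})\), must be finite. This holds because \(G_{t+1}\in[0,1]\). No boundedness of the policy or attacker parameters is needed beyond finiteness, since clipping absorbs everything.

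\textbf{Range of \(u_t^\pi\) and \(Y^\pi\).} With every state variable, \(S_t^\pi\), and \(H_t\) in \([0,1]\), I would bound \(u_t^\pi\) term by term. The two health components satisfy \(H_t\le 0.55+0.25+0.20=1\). The positive coefficients sum to \(0.30+0.16+0.12+0.12+0.12+0.08=0.90\), so \(u_t^\pi\le 0.90\). The negative coefficients sum to \(0.10+0.08+0.07+0.08+0.07+0.04\cdot0.35<0.41\), so \(u_t^\pi\ge -0.41\). Hence \(u_t^\pi\in[-1,1]\). The return \(Y^\pi=\sum_t\gamma^t u_t^\pi/\sum_t\gamma^t\) is a convex combination of the \(u_t^\pi\) with weights \(\gamma^t/\sum_s\gamma^s\), so \(Y^\pi\in[-1,1]\) as well.

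\textbf{Concentration.} The trajectories \(Y_1^\pi,\ldots,Y_N^\pi\) are i.i.d.\ draws, because each uses independent attacker-mode sequences from the fixed distribution. They take values in an interval of length \(2\) and have mean \(V^\pi\). Hoeffding's inequality \citep[Theorem~1]{hoeffding1963probability} then gives each one-sided tail as \(\exp(-2N^2\eta^2/(N\cdot 2^2))=\exp(-N\eta^2/2)\). The union bound, as in the probe-concentration lemma, yields the stated two-sided bound. The sharper range \([-0.41,0.90]\) would improve the constant, but it is not needed.

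The main obstacle is the first step. The stock updates for \(D_t\) and \(I_t\) are only described in words, so I would state explicitly that they are also of the form \(\Pi_{[0,1]}[\cdot]\), matching the paper's phrase ``clipped coordinate-wise''. The second step is then routine arithmetic. Independence across the \(N\) trajectories should also be made explicit as the Monte Carlo sampling assumption.
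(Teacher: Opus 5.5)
Your proof follows the paper's three steps: coordinatewise clipping gives invariance of $[0,1]^{11}$, the discount-normalized convex average puts $Y^\pi$ in $[-1,1]$, and two-sided Hoeffding with range $2$ gives the bound. The one difference is that the paper simply appeals to the implementation clipping $u_t^\pi$ to $[-1,1]$, while you derive $u_t^\pi\in[-0.414,\,0.90]$ directly from the displayed payoff, which is more self-contained. There is one harmless arithmetic slip: the negative coefficients sum to $0.40+0.04\cdot 0.35=0.414$, not less than $0.41$, but any bound inside $[-1,1]$ suffices.
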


\begin{proof}
Each transition coordinate is an affine or monotone nonlinear expression
followed by projection onto \([0,1]\), so the unit hypercube is invariant.
The implementation clips one-step utility to \([-1,1]\); a convex
discount-normalized average of such utilities also lies in \([-1,1]\).
Applying Hoeffding's bounded-sum inequality
\citep[Theorem~1]{hoeffding1963probability} to the independent trajectory
returns \(Y_1^\pi,\ldots,Y_N^\pi\in[-1,1]\) gives
\[
  \Pr\{|\widehat V_N^\pi-V^\pi|\ge \eta\}
  \le 2\exp\!\left(-\frac{2N^2\eta^2}{\sum_{n=1}^{N}(1-(-1))^2}\right)
  =2\exp\!\left(-\frac{N\eta^2}{2}\right).
\]
\end{proof}

The local audit uses \MarkovPolicyCount{} candidate policies for
\MarkovHorizon{} rounds.  The best local policy is \MarkovBestPolicy{} with
value \MarkovBestValue{}, survival \MarkovBestSurvival{}, final entropy budget
\MarkovBestEntropy{}, dataset growth \MarkovBestData{}, and model improvement
\MarkovBestModel{}.

The server audit conditions policy parameters on the full-dataset strategy
audit and evaluates \RemoteMarkovTrajectories{} trajectories for
\RemoteMarkovHorizon{} rounds.  The best long-horizon policy is
\RemoteMarkovBestPolicy{} with value \RemoteMarkovBestValue{} and gain
\RemoteMarkovGain{} over fixed screen.  This result is structurally different
from the single-frame result: the best one-step screen is
\RemoteStrategyBest{}, while the best repeated policy is budget-adaptive,
because it preserves entropy budget and model improvement while limiting
fatigue, exposure, and insurance/oil pressure.

\paragraph{Ethics stress audit.}
The repeated game is also scored under four non-operational risk dimensions:
false-alarm burden, overcollection risk, escalation pressure, and autonomy-risk
pressure.  The dimensions are motivated by the risk-management language of
NIST AI RMF, the OECD AI Principles, and the Russian AI ethics code
\citep{nist2023airmf,oecd2019aiprinciples,russian2021aicode}.  They are not a
moral verdict and do not authorize deployment; they are a reproducibility check
that a policy with higher game value is not silently selected by pushing all
social cost into exposure, fear/stress, data capture, or model autonomy.  The
ancillary table \texttt{ethics\_stress\_audit.csv} reports the four risk scores
and a simple risk-adjusted value.  Under this penalty the best local policy is
\EthicsRiskAdjustedBest{} with risk-adjusted value \EthicsRiskAdjustedValue{}.

\begin{figure}[H]
\centering
\includegraphics[width=\linewidth]{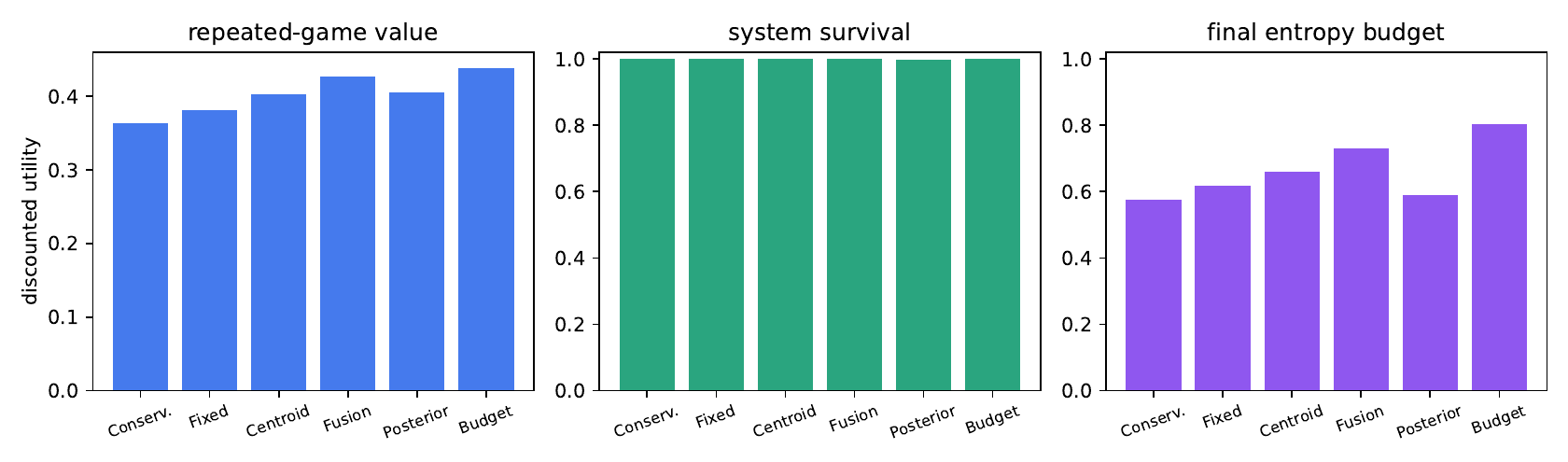}
\caption{Repeated system-game audit.  The bars compare candidate policy values,
survival probabilities, and final entropy budgets in the local Markov
simulation.  The server-side audit repeats the same state model at larger scale
with policy parameters conditioned on full-dataset strategy results.}
\label{fig:markov}
\end{figure}

\section{Visual Pattern Readout}

The game state is rendered as a tactical image rather than only as a matrix.
A real VisDrone aerial image is dimmed and used as background.  Virtual
defender and attacker positions are overlaid with nearest-neighbor distance
links, confidence scores, an asset marker, and a prediction banner.  This makes
the output visually inspectable and also supplies a raster for pattern
recognition.

\begin{figure}[H]
\centering
\includegraphics[width=\linewidth]{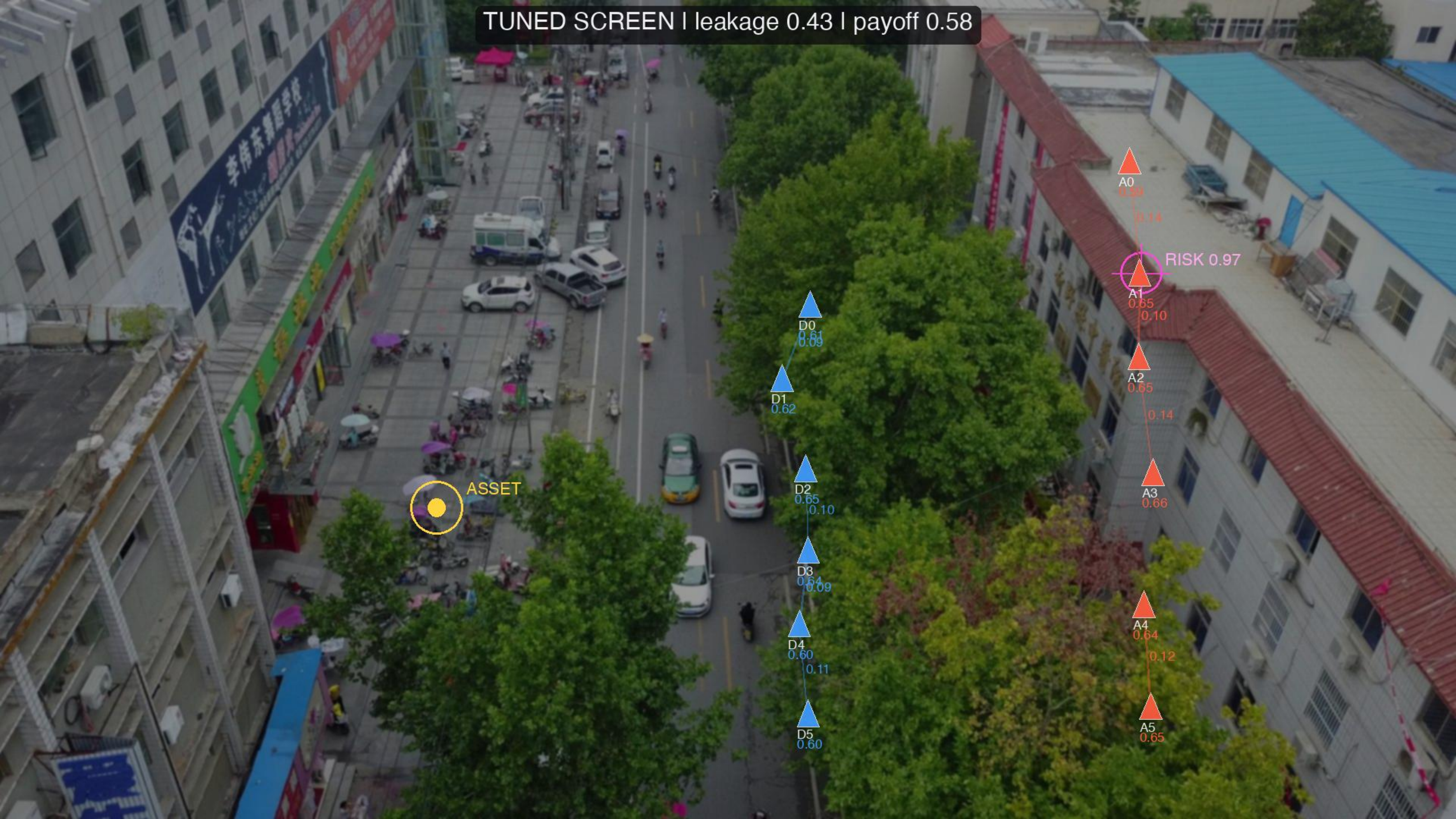}
\caption{Human-readable tactical overlay for the seed-17 scenario on a real
VisDrone frame.  The triangles are virtual agents generated from dataset-conditioned
capability vectors; nearest-neighbor links, confidence scores, asset marker,
and hotspot marker are rendered from the same seed.  For this exact state the
tuned screen reports leakage \SeedSeventeenTunedLeakage{}, payoff
\DemoValue{}, hotspot score \SeedSeventeenTunedHotspot{}, and interception
dispersion \(R_I=\SeedSeventeenTunedInterceptCv{}\).}
\label{fig:overlay}
\end{figure}

\begin{figure}[H]
\centering
\includegraphics[width=\linewidth]{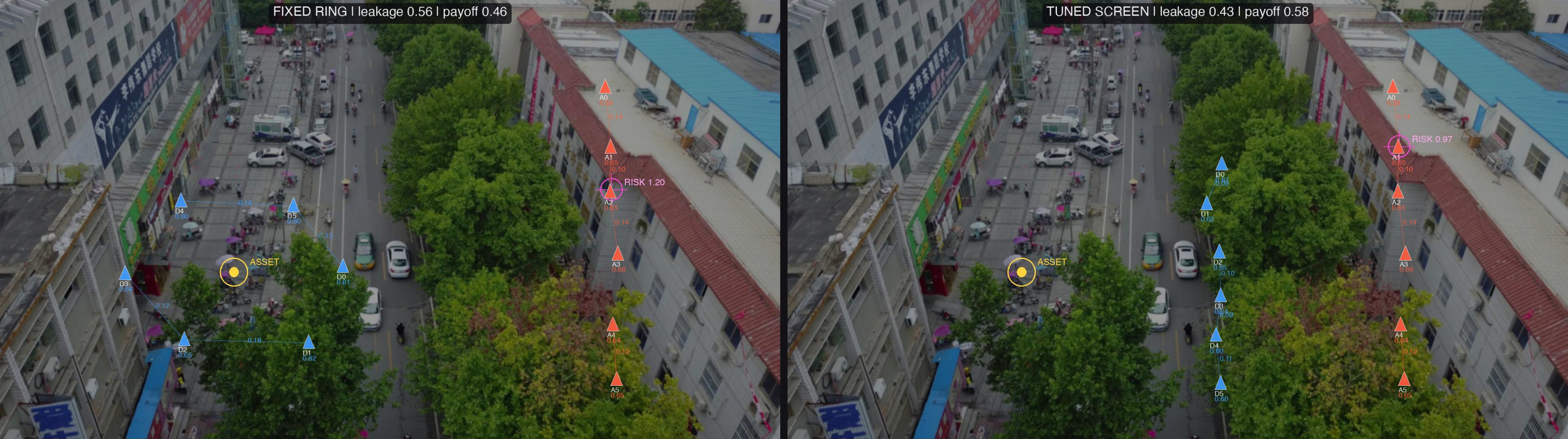}
\caption{Real-field visual counterfactual on the same VisDrone frame.  The left
panel shows a fixed-ring defender against the same line attack; the right panel
shows the tuned screen response for seed 17.  The fixed ring has leakage
\SeedSeventeenFixedLeakage{}, hotspot score \SeedSeventeenFixedHotspot{}, and
\(R_I=\SeedSeventeenFixedInterceptCv{}\).  The tuned screen moves to
\(\Delta x=\SeedSeventeenTunedOffset{}\), \(h=\SeedSeventeenTunedSpan{}\), and
reduces those values to \SeedSeventeenTunedLeakage{},
\SeedSeventeenTunedHotspot{}, and \SeedSeventeenTunedInterceptCv{},
respectively.}
\label{fig:before-after}
\end{figure}

\begin{figure}[H]
\centering
\includegraphics[width=\linewidth]{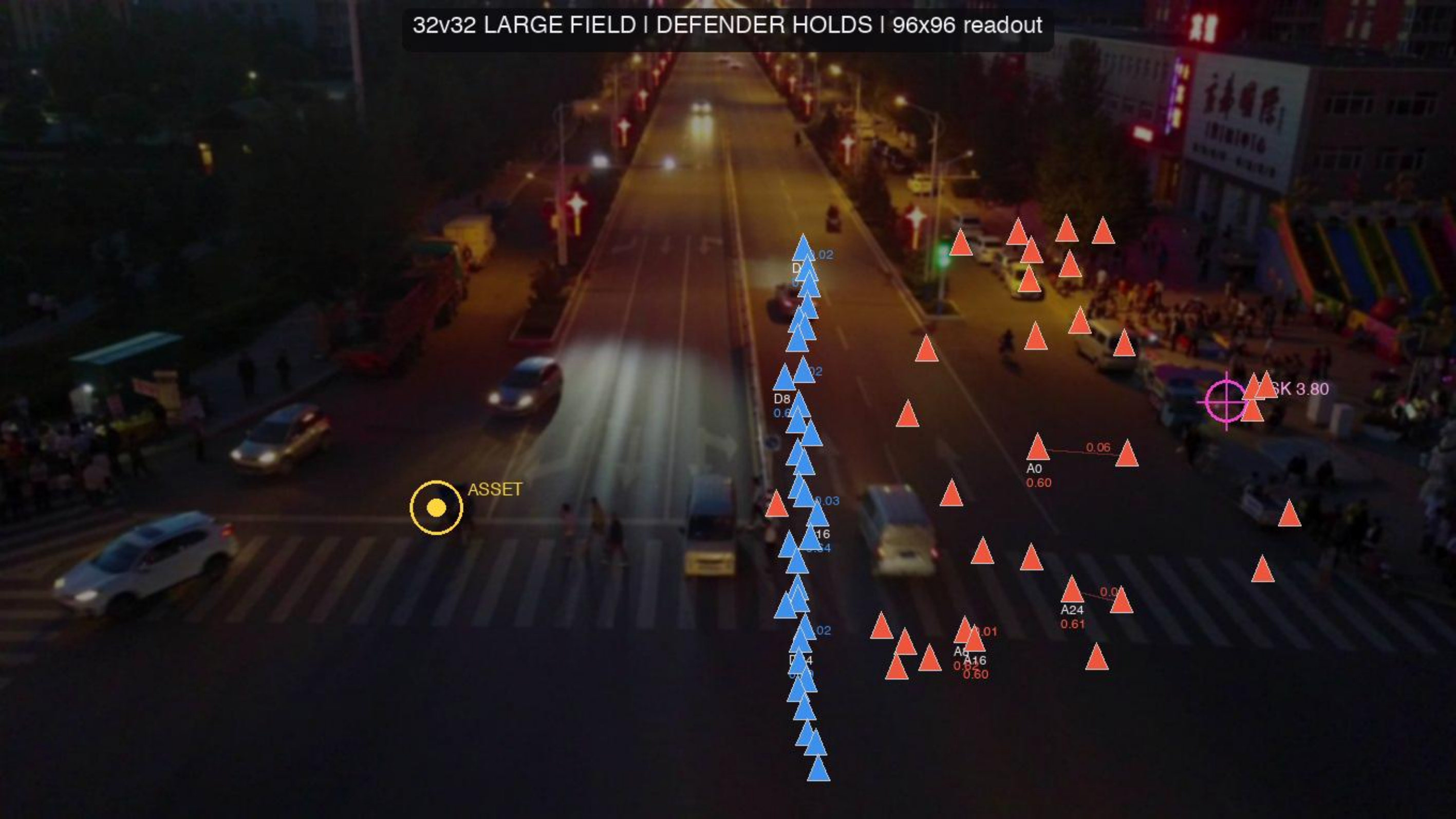}
\caption{Large-field 32v32 visual state on a different real VisDrone frame.  The
renderer keeps all agents virtual but expands the field to a denser 32v32 state;
labels and nearest-neighbor links are subsampled for legibility.  The readout
identifies the state as defender-holds with leakage \LargeFieldLeakage{}, payoff
\LargeFieldValue{}, and hotspot score \LargeFieldHotspot{}.}
\label{fig:large-field}
\end{figure}

For a seed \(s\), the overlay is the visible form of a certificate tuple
\[
  \mathcal C_s=
  \left(L_s,u_s,\mu_{I,s},R_{I,s},
  \bar w_s\!\left[1-\mu_{I,s}(1-\sqrt{n_a-1}R_{I,s})\right],
  \widehat z_{m,s},\widehat y_{m,s}\right),
\]
where \(L_s\) is leakage, \(u_s\) is payoff, \(\mu_I\) and \(R_I\) are the mean
and coefficient of variation of abstract interception scores, the fifth entry
is the equalization leakage certificate, \(\widehat z_{m,s}\) is the raster
hotspot, and \(\widehat y_{m,s}\) is the visual-regime label.  Figure
\ref{fig:large-field} therefore renders the same objects that enter the
finite-game payoff, leakage proof, and visual-regime audit; the figure is not a
separate illustrative layer.

To test whether the rendered state carries recoverable structure, we rasterize
defender positions, attacker positions, and the asset marker into a \(32\times
32\times3\) occupancy image.  Payoffs are split into three empirical regimes:
attacker-leaks, contested, and defender-holds.  A nearest-centroid classifier
trained on the raster alone reaches \CvAccuracy{} accuracy on \CvTestN{}
held-out 6v6 states.  The same 32-by-32 readout is repeated in the scale audit;
at \MaxSwarmPair{} it reaches \ThirtyTwoVisualAccuracy{} on
\ThirtyTwoVisualTestN{} held-out states.  This is not intended as a strong
classifier; it is a leakage check that the visual state contains enough
geometric signal to recover the strategic regime above chance and that the
signal survives larger swarms.

\paragraph{Field-resolution audit.}
The 32-by-32 raster is not assumed to be optimal.  For a rendered field
resolution \(m\in\{32,48,64,96,128\}\), let
\[
  \Phi_m(x,y,a_0)=\big(\phi^d_m,\phi^a_m,\phi^0_m\big)\in\mathbb R^{3m^2}
\]
be the Gaussian occupancy raster for defender positions, attacker positions, and
the asset marker.  A raster hotspot is computed from
\[
  \widehat H_m(z)=\phi^a_m(z)+0.45\phi^0_m(z)-0.85\phi^d_m(z),
\]
and is compared with the continuous hotspot \(H(z)\).  The local audit uses 384
32v32 states and reports held-out regime accuracy, mean hotspot error, active
cell ratio, and normalized map entropy.  Accuracy peaks at
\FieldResolutionBestSide{}-by-\FieldResolutionBestSide{} with
\FieldResolutionBestAccuracy{} under the fixed-pixel encoder, while the
128-by-128 raster falls to \FieldResolutionOneTwentyEightAccuracy{} and has mean
hotspot error \FieldResolutionOneTwentyEightError{} rather than the 32-by-32
error \FieldResolutionThirtyTwoError{}.  This mismatch diagnoses a bandwidth
error: fixed-pixel Gaussians shrink in image-normalized coordinates as \(m\)
increases, producing sparse maps and lower entropy
(\FieldResolutionOneTwentyEightEntropy{} at 128-by-128).

We therefore evaluate the finite encoder family in the scale-normalized raster
risk proposition.  At 128-by-128 the selected encoder is
\FieldOptimizedOneTwentyEightEncoder{} with
\(\lambda=\FieldOptimizedOneTwentyEightLambda{}\); it reaches
\FieldOptimizedOneTwentyEightAccuracy{} accuracy, reduces mean hotspot error to
\FieldOptimizedOneTwentyEightError{}, improves classification loss by
\FieldOptimizedOneTwentyEightLossGain{}, improves hotspot error by
\FieldOptimizedOneTwentyEightHotspotGain{}, and reduces the joint loss to
\FieldOptimizedOneTwentyEightJointLoss{} with gain
\FieldOptimizedOneTwentyEightJointGain{}.  A server-side GPU audit repeats the
occupancy analysis over \RemoteFieldScenarios{} synthetic 32-agent states up to
\RemoteFieldMaxSide{}-by-\RemoteFieldMaxSide{} in \RemoteFieldSeconds{} seconds
and records peak allocation \RemoteFieldPeakGb{} GB; it selects the same
scale-normalized family at 128-by-128 and reduces hotspot error from the
fixed-pixel regime to the optimized readout.

\begin{figure}[H]
\centering
\includegraphics[width=\linewidth]{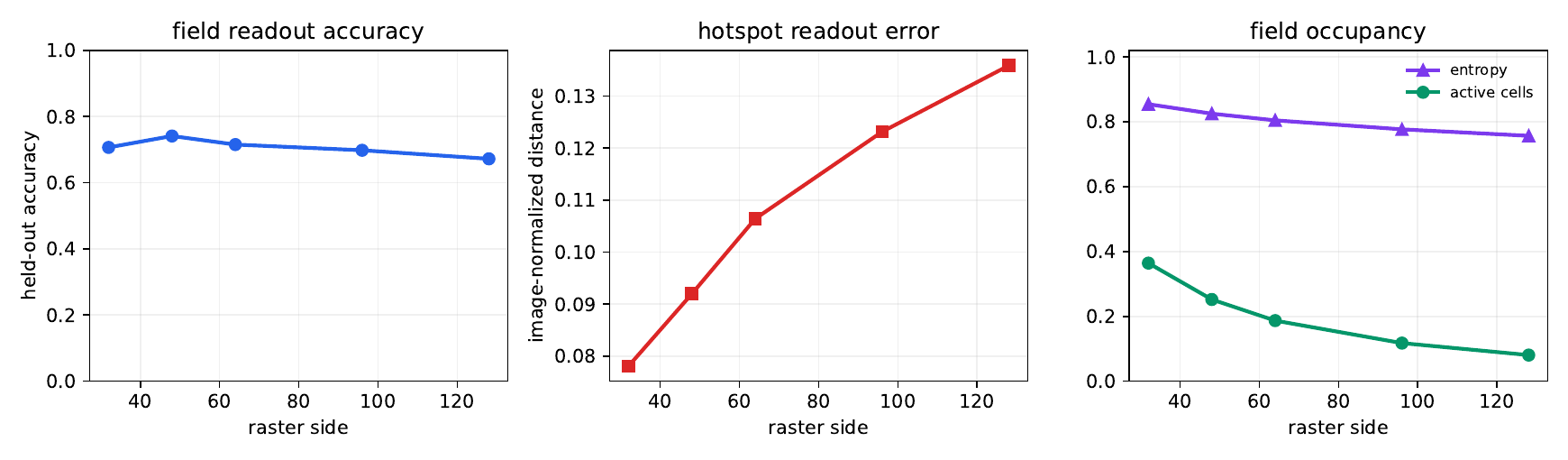}
\caption{Field-resolution audit from 32-by-32 to
\FieldResolutionMaxSide{}-by-\FieldResolutionMaxSide{} rasters.  The panels
compare visual-regime accuracy, raster-hotspot error, and occupancy
entropy/active-cell behavior.  The result links the visual benchmark to the
sparsity and hotspot-stability lemmas: larger rendered fields add numerical
detail, but fixed-pixel kernels can shrink the image-plane support and reduce
the classifier signal.}
\label{fig:field-resolution}
\end{figure}

\begin{figure}[H]
\centering
\includegraphics[width=\linewidth]{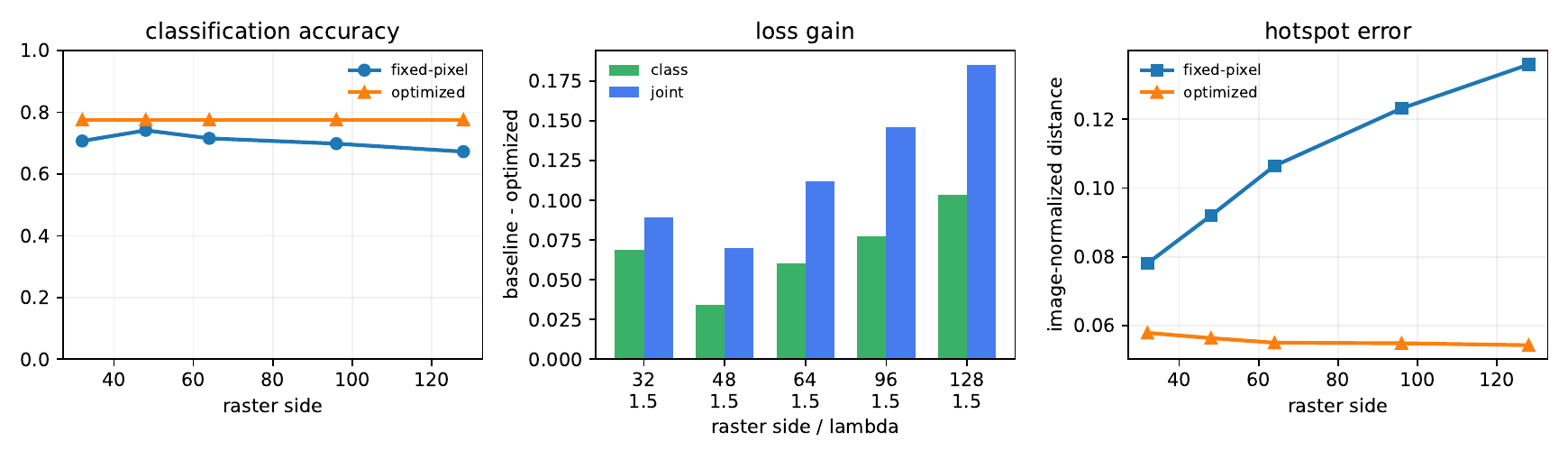}
\caption{Risk-minimized visual encoder for 32-by-32 through
\FieldResolutionMaxSide{}-by-\FieldResolutionMaxSide{} rasters.  The
optimization compares the fixed-pixel baseline with scale-normalized Gaussian
bandwidths and selects the encoder minimizing
\((1-\widehat a_m)+\widehat e_m\).  The 128-by-128 result is the numerical
instance of the finite encoder selection bound: the selected encoder cannot
increase empirical joint loss over the included fixed-pixel baseline, and here
both classification loss and hotspot error improve.}
\label{fig:field-optimization}
\end{figure}

\begin{figure}[H]
\centering
\includegraphics[width=\linewidth]{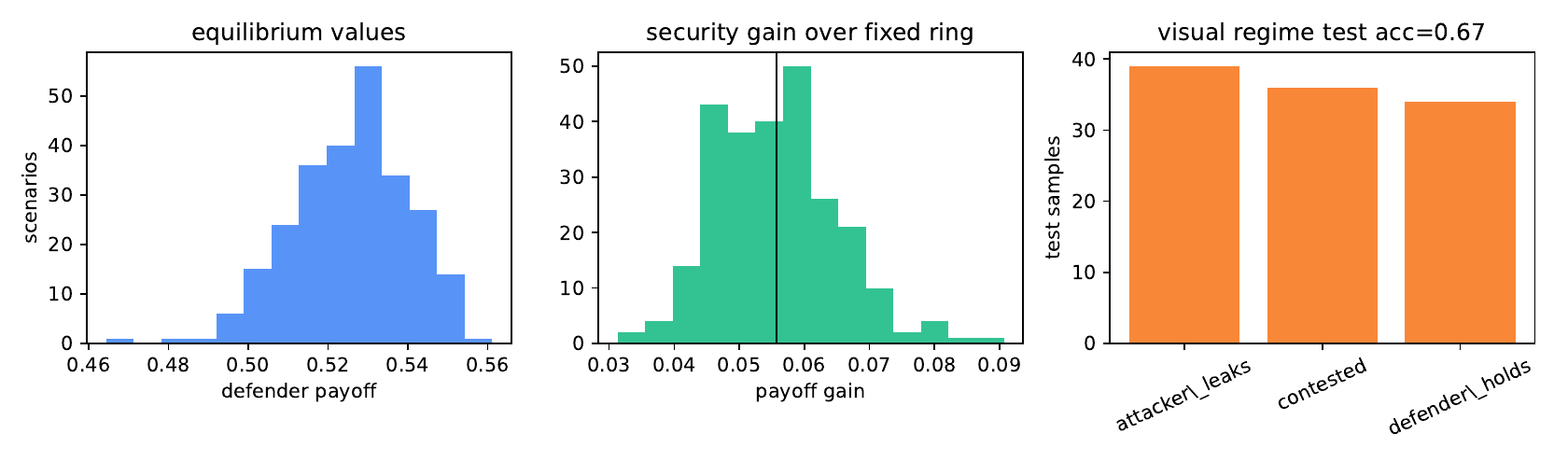}
\caption{Experiment summary.  The center panel reports the adversarial security
gain of the mixed policy over a fixed-ring defender.  The right panel reports
the held-out visual-regime benchmark.}
\label{fig:summary}
\end{figure}

\section{Reproducible Environment}

The repository separates the article source from reproducibility material.
The article source contains \texttt{main.tex}, \texttt{refs.bib},
\texttt{numbers.tex}, and selected figures.  Reproducibility material contains
code, scripts, result CSVs, JSON summaries, dataset manifests, Docker
instructions, formal checks, and server download scripts.  Large raw datasets
are not shipped with the source bundle.

The ancillary formal checks are deliberately narrow.  Lean and Coq files verify
the finite encoder-selection inequality used by Figure~\ref{fig:field-optimization}
and the fixed-point 128-by-128 numeric gain identities recorded in
\texttt{field\_numeric\_certificates.csv}.  They do not attempt to formalize the
whole computer-vision pipeline; they cover the representation-risk step where a
small algebraic or numeric mismatch would directly change the paper's field
resolution conclusion.

The local command sequence is:
\begin{quote}
\small
\begin{verbatim}
make experiments
make paper
make bundle
\end{verbatim}
\end{quote}
For an isolated environment, the repository also includes a Dockerfile:
\begin{quote}
\small
\begin{verbatim}
docker build -t visual-swarm-games .
docker run --rm -v "$PWD:/work" visual-swarm-games \
  make experiments paper
\end{verbatim}
\end{quote}
The server-side scripts download the full VisDrone and UAVSwarm material,
Anti-UAV reference repositories, and Sheffield provenance files into
\texttt{/workspace/data} when a remote machine is available.  A script flag
enables the much larger Sheffield video archives.  The same server bundle also
includes a CUDA batched game-theory audit that can run large
multiplicative-weights sweeps on the remote GPU without placing large data
files in the local repository.

\section{Limitations}

The current payoff model is intentionally compact.  It is a controlled
benchmark, not an aerodynamic or operational simulator.  The Bloom-filter
capability draw is useful because it is deterministic, compact, and tied to
real annotation statistics, but it is not a learned physical latent model.  The
visual classifier is deliberately weak; stronger recognition models should be
evaluated only after larger server-side datasets are materialized and split
with source-aware leakage controls.  Finally, the formation game uses a small
action set so that the matrix remains inspectable.  This is a virtue for an
auditable reproducibility bundle, but not a claim of tactical completeness.

\section{Conclusion}

We introduced a pattern-derived virtual swarm benchmark in which real
drone-vision annotations become a Bloom-filter sketch, the sketch generates
synthetic capability vectors, and those vectors populate multi-scale formation
games up to \MaxSwarmPair{}.  The same state is rendered as a human-readable
image and as a 32-by-32 raster for regime classification.  The result connects
computer vision, pattern recognition, and finite/repeated game analysis while
keeping the operational boundary explicit: the artifact studies virtual states,
not real drone control.

\bibliographystyle{unsrtnat}
\bibliography{refs}

\end{document}